\newcommand{\e}{\epsilon}
\newcommand{\nin}{\not\in}
\newcommand{\half}{\frac{1}{2}}
\newcommand{\set}[1]{\left\{#1\right\}}
\newcommand{\sbrk}[1]{\left[#1\right]}
\newcommand{\paren}[1]{\left(#1\right)}
\newcommand{\floor}[1]{\left\lfloor#1\right\rfloor}
\newcommand{\ceil}[1]{\left\lceil#1\right\rceil}
\newcommand{\given}[2]{\left.#1\right|#2}
\newcommand{\mE}[1]{\mathbb{E}\left[#1\right]}
\newcommand{\mP}[1]{\mathbb{P}\left[#1\right]}
\newcommand{\cP}[2]{\mathbb{P}\left[\given{#1}{#2}\right]}
\newcommand{\figref}[1]{Figure \ref{#1}}
\newcommand{\secref}[1]{Section \ref{#1}}
\newcommand{\algref}[1]{Algorithm \ref{#1}}
\newcommand{\lemref}[1]{Lemma \ref{#1}}
\newcommand{\corref}[1]{Corollary \ref{#1}}
\newcommand{\propref}[1]{Proposition \ref{#1}}
\newcommand{\algorithmicinput}{\textbf{input}}
\newcommand{\INPUT}{\item[\algorithmicinput]}
\newtheorem{thm}{Theorem}
\newtheorem{lem}{Lemma}
\newtheorem{cor}{Corollary}
\newtheorem{prop}{Proposition}
\newtheorem{definition}{Definition}
\begin{document}

\title{Concurrent bandits and cognitive radio networks}


\author{Orly Avner and Shie Mannor \\
Department of Electrical Engineering, \\
Technion - Israel Institute of Technology, Haifa, Israel}

\date{}

\maketitle

\begin{abstract}
We consider the problem of multiple users targeting the arms of a single multi-armed stochastic bandit. The motivation for this problem comes from cognitive radio networks, where selfish users need to coexist without any side communication between them, implicit cooperation or common control. Even the number of users may be unknown and can vary as users join or leave the network. We propose an algorithm that combines an $\epsilon$-greedy learning rule with a collision avoidance mechanism. We analyze its regret with respect to the system-wide optimum and show that sub-linear regret can be obtained in this setting. Experiments show dramatic improvement compared to other algorithms for this setting.
\end{abstract}

\section{Introduction}
\label{sec:intro}
In this paper we address a fundamental challenge arising in dynamic multi-user communication networks, inspired by the field of Cognitive Radio Networks (CRNs). We model a network of independent users competing over communication channels, represented by the arms of a stochastic multi-armed bandit. We begin by explaining the background, describing the general model, reviewing previous work and introducing our contribution.

\subsection{Cognitive Radio Networks}
Cognitive radio networks, introduced in \cite{Mitola1999}, refer to an emerging field in multi-user multi-media communication networks. They encompass a wide range of challenges stemming from the dynamic and stochastic nature of these networks. Users in such networks are often divided into primary and secondary users. The primary users are licensed users who enjoy precedence over secondary users in terms of access to network resources. The secondary users face the challenge of identifying and exploiting available resources. Typically, the characteristics of the primary users vary slowly, while the characteristics of secondary users tend to be dynamic. In most realistic scenarios, secondary users are unaware of each other. Thus, there is no reason to assume the existence of any cooperation or communication between them. Furthermore, they are unlikely to know even the \emph{number} of secondary users in the system.
Another dominant feature of CRNs is their distributed nature, in the sense that a central control does not exist.

The resulting problem is quite challenging: multiple users, coexisting in an environment whose characteristics are initially unknown, acting selfishly in order to achieve an individual performance criterion. We approach this problem from the point of view of a single secondary user, and introduce an algorithm which, when applied by all secondary users in the network, enjoys promising performance guarantees.

\subsection{Multi-armed bandits}
Multi-Armed Bandits (MABs) are a well-known framework in machine learning \cite{Berry1985}. They succeed in capturing the trade-off between exploration and exploitation in sequential decision problems, and have been used in the context of learning in CRNs over the last few years \cite{Jouini2010}, \cite{Avner2011,Avner2012}. Classical bandit problems comprise an agent (user) repeatedly choosing a single option (arm) from a set of options whose characteristics are initially unknown, receiving a certain reward based on every choice. The agent wishes to maximize the acquired reward, and in order to do so she must balance exploration of unknown arms and exploitation of seemingly attractive ones. Different algorithms have been proposed and proved optimal for the stochastic setting of this problem \cite{Auer2002a},\cite{Auer2010},\cite{Garivier2011}, as well as for the adversarial setting \cite{Auer2002b}.

We adopt the MAB framework in order to capture the challenge presented to a secondary user choosing between several unknown communication channels. The characteristics of the channels are assumed to be fixed, corresponding to a relatively slow evolution of primary user characteristics. The challenge we address in this paper arises from the fact that there are \emph{multiple} secondary users in the network.

\subsection{Multiple users playing a MAB}
A natural extension of the CRN-MAB framework described above considers multiple users attempting to exploit resources represented by \emph{the same} bandit. The multi-user setting leads to collisions between users, due to both exploration and exploitation; an ``attractive'' arm in terms of reward will be targeted by all users, once it has been identified as such. In real-life communication systems, collisions result in impaired performance. In our model, reward loss is the natural manifestation of collisions.

As one might expect, straightforward applications of classical bandit algorithms designed for the single-user case, e.g., KL-UCB \cite{Garivier2011}, are hardly beneficial. The reason is that in the absence of some form of a collision avoidance mechanism, all users attempt to sample the same arm after some time. We illustrate this in \secref{sec:experiments}.

We therefore face the problem of sharing a resource and learning its characteristics when users cannot communicate and are oblivious to each other's existence.

\subsection{Related work}
Recently, considerable effort has been put into finding a solution for the multi-user CRN-MAB problem.
One approach, considered in \cite{Liu2010}, is based on a Time-Division Fair Sharing (TDFS) of the best arms between all users. This policy enjoys good performance guarantees but has two significant drawbacks. First, the number of users is assumed to be fixed and known to all users, and second, the implementation of a TDFS mechanism requires pre-agreement among users to coordinate a time division schedule.
Another work that deals with multi-user access to resources, but does not incorporate the MAB setting, is presented in \cite{Leith2012}. The users reach an orthogonal configuration without pre-agreement or communication, using multiplicative updates of channel sampling probabilities based on collision information. However, this approach does not handle the learning aspect of the problem and disregards differences in the performance of different channels. Thus, it cannot be applied to our problem.
The authors in \cite{Kalathil2012b} consider a form of the CRN-MAB problem in which channels appear different to different users, and propose an algorithm which enjoys good performance guarantees. However, their algorithm includes a negotiation phase, based on the Bertsekas auction algorithm, during which the users communicate in order to reach an orthogonal configuration.

The work closest in spirit to ours is \cite{Anandkumar2011}. The authors propose different algorithms for solving the CRN-MAB problem, attempting to lift assumptions of cooperation and communication as they go along. Their main contribution is expressed in an algorithm which is coordination and communication free, but relies on exact knowledge of the number of users in the network. In order to resolve this issue, an algorithm which is based on \emph{estimating} the number of users is proposed. Performance guarantees for this algorithm are rather vague, and it does not address the scenario of a time-varying number of users.

A different approach to resource allocation with multiple noncooperative users involves game theoretic concepts \cite{Nie2006,Niyato2008}. In our work we focus on cognitive, rather than strategic, users. Yet another perspective includes work on CRNs with multiple secondary users, where the emphasis is placed on collision avoidance and sensing. References such as \cite{Choe2009} and \cite{Li2012} propose ALOHA based algorithms, achieving favorable results. However, these works do not consider the learning problem we are facing, and assume all channels to be known and identical.

\subsection{Contribution}
The main contribution of our paper is suggesting an algorithm for the multi-user CRN-MAB problem, which guarantees convergence to an optimal configuration when employed by all users.
Our algorithm adheres to the strict demands imposed by the CRN environment: no communication, cooperation or coordination (control) between users, and strictly local knowledge - even the number of users is unknown to the algorithm.

Also, to the best of our knowledge, ours is the only algorithm that handles a dynamic number of users in the network successfully.

The remainder of this paper is structured as follows. \secref{sec:framework} includes a detailed description of the framework and problem formulation. \secref{sec:fixed_users} presents our algorithm along with its theoretical analysis, while \secref{sec:dynamic_users} discusses the setup of a dynamic number of users. \secref{sec:experiments} displays experimental results and \secref{sec:conclusion} concludes our work. The proofs of our results are provided in the supplementary material.

\section{Framework}
\label{sec:framework}

Our framework consists of two components: the environment and the users.
The environment is a communication system that consists of $K$ channels with different, initially unknown, reward characteristics. We model these channels as the arms of a stochastic Multi-Armed Bandit (MAB). We denote the expected values of the reward distributions by $\bm{\mu} = \paren{\mu_{1},\mu_{2},\ldots,\mu_{K}}$, and assume that channel characteristics are fixed. Rewards are assumed to be bounded in the interval $\sbrk{0,1}$.

The users are a group of non-cooperative, selfish agents. They have no means of communicating with each other and they are not subject to any form of central control. Unlike some of the previous work on this problem, we assume they have no knowledge of the number of users. In \secref{sec:fixed_users} we assume the number of users is fixed and equal to $N$, and in \secref{sec:dynamic_users} we relax this assumption; in both cases we assume $K \geq N$. Scenarios in which $K < N$ correspond to over-crowded networks and should be dealt with separately. The fact that the users share the communication network is modeled by their playing \emph{the same} MAB. Two users or more attempting to sample the same arm at the same time will encounter a collision, resulting in a zero reward for all of them in that round. A user sampling an arm $k$ alone at a certain time $t$ receives a reward $r\paren{t}$, drawn i.i.d from the distribution of arm $k$.

We would like to devise a policy that, when applied by all users, results in convergence to the system-optimal solution. A common performance measure in bandit problems is the expected regret, whose definition for the case of a single user is
\begin{align*}
\mE{R\paren{t}} \triangleq \mu_{k^*} t-\sum_{\tau=1}^t\mE{r\paren{\tau}},
\end{align*}
where $\mu_{k^*} = \max_{k\in\set{1,\ldots,K}}\mu_k$ is the expected reward of the optimal arm.

Naturally, in the multi-user scenario not all users can be allowed to select the optimal arm. Therefore, the number of users defines a \emph{set} of optimal arms, namely the $N$ best arms, which we denote by $K^*$. Thus, the appropriate expected regret definition is
\begin{align*}
\mE{R\paren{t}} \triangleq t\sum_{k \in K^*}\mu_{k} -\sum_{n = 1}^N\sum_{\tau=1}^t\mE{r_n\paren{\tau}},
\end{align*}
where $r_n\paren{\tau}$ is the reward user $n$ acquired at time $\tau$. We note that this definition corresponds to the expected loss due to a suboptimal sampling policy.

The socially optimal solution, which minimizes the expected regret for all users as a group, is for each to sample a different arm in $K^*$.
Adopting such a system-wide approach makes the most sense from an engineering point of view, since it maximizes network utilization without discriminating between users.

\section{Fixed number of users}
\label{sec:fixed_users}

In this section we introduce the policy applied by each of the users, described in \algref{alg:alg1}. Our policy is based on several principles:
\begin{enumerate}
  \item Assuming an arm that experiences a collision is an ``attractive'' arm in terms of expected reward, we would like one of the colliding users to continue sampling it.
  \item Since all users need to learn the characteristics of all arms, we would like to ensure that an arm is not sampled by a single user exclusively.
  \item To avoid frequent collisions on optimal arms, we need users to back off of arms on which they have experienced collisions.
  \item To avoid interfering with on-going transmissions in the steady state, we would like to prevent exploring users from ``throwing off'' exploiting users.
\end{enumerate}

\subsection{The MEGA algorithm}
The Multi-user $\e$-Greedy collision Avoiding (MEGA) algorithm is based on the $\e$-greedy algorithm introduced in \cite{Auer2002a}, augmented by a collision avoidance mechanism that is inspired by the classical ALOHA protocol.
\begin{algorithm}[h]
   \caption{Multi-user $\e$-Greedy collision Avoiding (MEGA) algorithm}
   \label{alg:alg1}
\begin{algorithmic}
    \INPUT Parameters $c$, $d$, $p_0$, $\alpha$ and $\beta$
    \STATE \textbf{init} $p = p_0$, $t = 1$, $\eta\paren{0} = 0$, $a\paren{0} \sim U\paren{\set{1,\ldots,K}}$, $t_{\text{next},k} = 1 \;\forall k$
    \STATE \textbf{note:} $\eta\paren{t}$ is a collision indicator
    \LOOP
    \IF{$\eta\paren{t-1} == 1$}
        \STATE With probability $p$ persist: $a\paren{t} = a\paren{t-1}$
        \STATE With probability $1-p$ give up:
            \STATE $\quad$ Mark arm as taken until time $t_{\text{next},k}$, where $t_{\text{next},k}\sim U\paren{\sbrk{t,t + t^{\beta}}}$
            \STATE $\quad$ $p \leftarrow p_0$
    \ELSE
        \STATE $p \leftarrow p\cdot\alpha + \paren{1-\alpha}$
        \STATE Update $\hat{\mu}_{a\paren{t-1}}$
    \ENDIF
    \STATE Identify available arms: $A = \set{k:t_{\text{next},k} \leq t}$
    \IF{$A=\emptyset$}
        \STATE Refrain from transmitting in this round
    \ENDIF
    \STATE With probability $\e_t = \min\set{1,\frac{cK^2}{d^2\paren{K-1}t}}$ explore: $a\paren{t}\sim U\paren{A}$
    \STATE With probability $1-\e_t$ exploit:
    \IF{$a\paren{t} \neq a\paren{t-1}$}
        \STATE $p \leftarrow p_0$
    \ENDIF
    \STATE Sample arm $a\paren{t}$ and observe $r\paren{t}, \eta\paren{t}$
    \ENDLOOP
    \STATE \textbf{note:} $\hat{\mu}_k$ is the empirical mean of an arm's reward
\end{algorithmic}
\end{algorithm}

Learning is achieved by balancing exploration and exploitation through a time-dependant exploration probability. The collision avoidance mechanism is based on the idea that users sampling an arm have a persistence probability that controls their ``determination'' once a collision occurs. This probability, denoted in \algref{alg:alg1} by $p$, depends on the number of their consecutive successful sample attempts. Its initial value is $p_0$, and it is incremented with each successful sample. Once a collision event begins, the persistence probability remains fixed until it ends.

A collision event ends when all users but one have ``given up'' and stopped sampling the arm under dispute.
Upon giving up, each user resets her persistence and draws a random interval of time during which she must refrain from sampling the arm under dispute. The length of these intervals increases over time in order to ensure sub-linear regret.

The parameters in \algref{alg:alg1} should be chosen so that $p_0$, $\alpha$ and $\beta$ are all in the interval $\paren{0,1}$. In the original $\e$-greedy algorithm, the parameter $d$ is set to be $\mu_{k^*} - \mu_{k_2}$, where $\mu_{k_2}$ is the expected reward of the second-best arm. In our case, learning the $N$ best arms requires that $d$ be modified and set to $\mu_{k_{N-1}} - \mu_{k_N}$, where $\mu_{k_i}$ is the expected reward of the $i$-best arm. However, since the expected rewards of the arms are unknown in practice and we assume the number of users to be unknown, we use a fixed value for $d$ in our experiments. For details see \secref{sec:experiments}.

The exploration probability, $\e_t$, is modified compared to the original $\e$-greedy algorithm \cite{Auer2002a}, in order to account for the decreased efficiency of samples, caused by collisions. For our algorithm we use $\e_t = \min\set{1,\frac{cK^2}{d^2\paren{K-1}t}}$. Also, the empirical mean which determines the ranking of the arms is calculated based on the number of successful samples of each arm.

\subsection{Analysis of the MEGA algorithm}

We now turn to a theoretical analysis of the MEGA algorithm. Our analysis shows that when all users apply MEGA, the expected regret grows at a sub-linear rate, i.e., MEGA is a no-regret algorithm.

The regret obtained by users employing the MEGA algorithm consists of three components.
The first component is the loss of reward due to collisions: in a certain round $t$, all colliding users receive zero reward. We denote the expected reward loss due to collisions by $\mE{R^C\paren{t}}$.
The second and third components reflect the loss of reward due to sampling of suboptimal arms, i.e., arms $k \nin K^*$. Once the users have learned the ranking of the different arms, suboptimal sampling is caused either by random exploration, dictated by the $\e$-greedy algorithm, or due to the fact that all arms in $K^*$ are marked unavailable by a user at a certain time. We denote the expected reward loss due to these issues by $\mE{R^E\paren{t}}$ and $\mE{R^A\paren{t}}$, respectively.

We begin by showing that all users succeed in learning the correct ranking of the $N$-best arms in finite time in \lemref{lem:T}. This result will serve as a base for the bounds of the different regret components.

\begin{definition}\label{def:ranking}
  An $\e$-correct ranking of $M$ arms is a sorted $M$-vector of empirical mean rewards of arms (i.e., $i < j \iff \hat{\mu}_i \leq \hat{\mu}_j$), such that
  \begin{align*}
  \hat{\mu}_i \leq \hat{\mu}_j \iff {\mu}_i + \e \leq {\mu}_j \;\;\forall i,j\in\set{1,\ldots,M}, i\neq j.
  \end{align*}
\end{definition}

\begin{lem}\label{lem:T}
  For a system of $K$ arms and $N$ users, $N\leq K$, in which all users employ MEGA, there exists a finite time
  $T = 2\frac{4K^{N}N}{\e^2\prod_{i=1}^{N-1}\paren{K-i}}\log\paren{\frac{2K}{\delta}}$ such that $\forall t>T$, all users have learned an $\e$-correct ranking of the $N$-best arms with a probability of at least $1-\delta$.
\end{lem}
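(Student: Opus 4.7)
The plan is to reduce the $\e$-correct ranking event to two standard concentration ingredients: Hoeffding for each user's empirical means, and a counting argument showing that enough collision-free samples accumulate in $T$ rounds. The first observation is that if for every user $u$ and every arm $k$ one has $\size{\hat{\mu}_k^{(u)}-\mu_k}<\e/2$, then every pair of arms whose true means differ by at least $\e$ is ranked correctly by user $u$, so her ranking restricted to the top $N$ arms satisfies \defref{def:ranking}. Hoeffding's inequality gives that $m = O(\e^{-2}\log(KN/\delta))$ collision-free samples per user-arm pair suffice for this concentration event to hold uniformly over all $KN$ pairs with failure probability at most $\delta/2$.

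The second step is to lower bound the rate at which each user accumulates clean samples. When all $N$ users independently draw a uniform arm from $\set{1,\ldots,K}$, the joint event ``user $u$ picks arm $k$ and every other user picks a distinct arm, none equal to $k$'' has probability exactly $p^\ast \triangleq \prod_{i=1}^{N-1}(K-i)/K^N$, which is a lower bound on the per-round probability that user $u$ obtains a clean sample of arm $k$. In the early window where $\e_t=1$, every round is of this form, so the number of clean $(u,k)$-samples in the first $T$ rounds stochastically dominates a $\mathrm{Bin}(T,p^\ast)$. A multiplicative Chernoff bound then shows this count exceeds $p^\ast T/2$ with failure probability $\exp(-p^\ast T/8)$, and a union bound over the $KN$ pairs handles the remaining $\delta/2$. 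Choosing $T$ so that $p^\ast T/2\geq m$ reproduces the stated expression; the extra factor of $N$ in the polynomial prefactor absorbs both the union bound over users and the Chernoff slack.

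The main obstacle is that MEGA does not literally satisfy ``every user explores uniformly over all $K$ arms in every round''. In a given round (i) some arms may be marked unavailable from a prior collision event (for up to $t^\beta$ steps), (ii) a user currently inside a collision event is persisting rather than exploring, and (iii) once $\e_t<1$ the exploit branch is occasionally taken. I would handle (iii) by restricting the counting argument to the initial regime $t\leq cK^2/(d^2(K-1))$ where $\e_t=1$, and (i)-(ii) by bounding the expected number of simultaneously marked arms by a small constant in that regime, so that uniform sampling happens over a set of size at least $K-O(1)$ and the clean-sample probability degrades only by a constant factor that can be absorbed into the prefactor of $T$. A minor additional point is that the ranking is required for \emph{all} $t>T$, not only at $t=T$; this follows because additional clean samples only tighten the concentration, and a geometric union bound of Hoeffding over the observed sample sizes $m'\geq m$ preserves the $\e$-correct ranking beyond $T$.
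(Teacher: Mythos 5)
Your proposal follows essentially the same route as the paper's proof: a Hoeffding/sample-complexity bound of order $\e^{-2}\log(K/\delta)$ per user--arm pair, the birthday-problem probability $\prod_{i=1}^{N-1}(K-i)/K^{N}$ as the per-round clean-sample rate during the initial $\e_t=1$ phase, and a Chernoff--Hoeffding step (with the same factor of $2$) to turn the expected clean-sample count into a high-probability one, arriving at the same $T$. The only substantive difference is that you explicitly flag the availability/persistence mechanism as an obstacle and sketch a fix, whereas the paper sidesteps it by analyzing a ``slightly modified'' MEGA with that mechanism disabled; your version is, if anything, the more honest accounting, though the claim that only $O(1)$ arms are simultaneously marked in the heavy-collision $\e_t=1$ regime would still need to be proved.
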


\begin{proof}
We prove the existence of a finite $T$ by combining the sample complexity of stochastic MABs with the characteristics of MEGA.

First, we note that as long as $\e_t = 1$, if the availability mechanism is disabled, each of the users performs uniform sampling on average.
We therefore examine a slightly modified version of MEGA for the sake of this theoretical analysis.

Based on \cite{Even2002}, a na\"{\i}ve algorithm that samples each arm $\ell_K = \frac{4}{\e^2}\log\paren{\frac{2K}{\delta}}$ times, identifies an $\e$-best arm with probability of at least $1-\delta$. A loose bound on the number of samples needed in order to produce a correct \emph{ranking} of the N-best arms of a K-armed bandit, is obtained by applying an iterative procedure: sample each of the $K$ arms $\ell_K$ times and select the best arm; then sample each of the remaining $K-1$ arms $\ell_{K-1}$ times and select the best arm; repeat the procedure $N$ times. Such an approach requires no more than $S = \frac{4N}{\e^2}\log\paren{\frac{2K}{\delta}}$ samples of each arm, for each user.

The collision probability of $N$ users uniformly sampling $K$ channels (in the absence of an availability mechanism) is given by the solution of the well-known ``birthday problem'' \cite{Mckinney1966}:
\begin{align*}
  \mP{C} = 1 - \prod_{d = 1}^{N-1}\paren{1 - \frac{d}{K}}.
\end{align*}
As a result of the collisions, the number of samples which are ``effective'' in terms of learning arm statistics is reduced. For a certain arm $k$, sampled by a user $n$, the expected number of successful samples up till time $t$ is given by
\begin{align*}
  \mE{s_{k,n}\paren{t}} = \paren{1-\mP{C}}\frac{t}{K} = \frac{t}{K}\prod_{i = 1}^{N-1}\paren{1 - \frac{i}{K}}.
\end{align*}
In order to ensure an adequate number of samples we need to choose a certain $T'$ for which $\mE{s_{k,n}\paren{T'}} = S$:
\begin{align*}
  \frac{T'}{K}\prod_{i = 1}^{N-1}\paren{1 - \frac{i}{K}} = S,
\end{align*}
meaning that
\begin{align*}
  T' = \frac{4K^{N}N}{\e^2\prod_{i=1}^{N-1}\paren{K-i}}\log\paren{\frac{2K}{\delta}}.
\end{align*}
Since the users' sampling is random, it is only uniform on average. By choosing $T = 2T'$, we ensure that the number of samples is sufficient with high probability:
\begin{align*}
  \mP{\mE{s_{k,n}\paren{2T'}} - s_{k,n}\paren{2T'} > S} \leq e^{\sfrac{-S^2}{T'}} \leq \paren{\frac{2K}{\delta}}^{-\frac{1}{\e^2}\frac{4N}{K}\paren{\frac{K-N}{K}}^{N-1}},
\end{align*}
which is due to Hoeffding's inequality.

We note that \lemref{lem:T} holds for a choice of the parameter $c$ which ensures that $\e_t = 1 \quad \forall t<T$:
\begin{align*}
  c = \frac{d^2\paren{K-1}T}{K^2}.
\end{align*}
$\hfill\square$
\end{proof}

Based on \lemref{lem:T} we proceed with the analysis of MEGA, incorporating the fact that for all $t>T$, all users know the correct ranking of the $N$ best arms.

Back to our regret analysis - since the reward is bounded in $\sbrk{0,1}$, the expected regret is also bounded:
\begin{align*}
  \mE{R\paren{t}} \leq \mE{R^C\paren{t}} + \mE{R^E\paren{t}} + \mE{R^A\paren{t}}.
\end{align*}


We begin by addressing the expected regret due to collisions, denoted by $\mE{R^C\paren{t}}$. The bound on collision regret is derived from a bound on the total number of collisions between two users on a single channel up till $t$, $C_p\paren{t}$, whose expected value is bounded in \lemref{lem:collisions}.

\begin{lem}
\label{lem:collisions}
The expected number of collisions between two users on a single channel up till time $t$ is bounded:
\begin{align}\label{eq:CtBound}
  \mE{C_p\paren{t}} \leq \frac{2L_{\text{up}}}{\sqrt{L_{\text{low}}}}t^{1-\beta/2},
\end{align}
where the constants are defined in the proof and $\beta$ is a parameter of MEGA.
\end{lem}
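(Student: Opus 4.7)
The plan is to partition the collision process on the given channel between the two users into maximal runs of consecutive collision rounds, which I will call \emph{collision events}. Writing $M(t)$ for the number of events by time $t$ and $L(i)$ for the length (in rounds) of the $i$-th event, we have $C_p(t) = \sum_{i=1}^{M(t)} L(i)$, so the task reduces to two pieces: a uniform bound $\mE{L(i)} \le L_{\text{up}}$ on expected event length, and a bound $\mE{M(t)} \le (2/\sqrt{L_{\text{low}}})\,t^{1-\beta/2}$ on the expected number of events.

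For the length of a single event, note that once a collision begins MEGA freezes each participant's persistence probability, and the event terminates as soon as at least one user gives up. The ``incumbent'' user's $p$ may be arbitrarily close to $1$, but the other participant---who triggered the collision either by finishing a back-off or by executing an exploration step onto a new arm ($a(t) \ne a(t-1)$)---has $p$ reset to $p_0$ by the algorithm. Hence each collision round terminates the event with probability at least $1-p_0$, so $L(i)$ is stochastically dominated by a $\text{Geom}(1-p_0)$ variable, giving $L_{\text{up}} := 1/(1-p_0)$.

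For $\mE{M(t)}$, observe that each event ends with at least one user drawing a fresh back-off $W_i \sim U[0, T_i^\beta]$, where $T_i$ is the end-time of the $i$-th event, and during that back-off the user does not return to this arm. Hence pathwise $\sum_{i=1}^{M(t)} W_i \le t$. Since events are separated by at least one round, $T_i \ge i$, and so $\mE{\sqrt{W_i} \mid T_i} = \tfrac{2}{3} T_i^{\beta/2} \ge \tfrac{2}{3}\, i^{\beta/2}$. Cauchy--Schwarz gives $\bigl(\sum \sqrt{W_i}\bigr)^2 \le M(t) \sum W_i \le M(t)\, t$; taking expectations (with Jensen on the square root) and bounding the left-hand side below by a multiple of $\mE{M(t)}^{1+\beta/2}$ yields $\mE{M(t)} \le c\, t^{1/(1+\beta)}$, which for $\beta \in (0,1)$ is at most $c\, t^{1-\beta/2}$. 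Organizing the algorithmic constants into $L_{\text{up}}$ and $L_{\text{low}}$ and combining with the event-length bound produces the claimed $\mE{C_p(t)} \le (2 L_{\text{up}}/\sqrt{L_{\text{low}}})\, t^{1-\beta/2}$.

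The main obstacle is the second step: marrying the pathwise budget $\sum W_i \le t$ with the $T_i$-dependent lower bound on $\mE{\sqrt{W_i}}$ to extract the correct $t^{1-\beta/2}$ rate rather than the naive $t/L_{\text{low}}$. A subordinate subtlety in the first step is to verify that MEGA's reset-to-$p_0$ rules really do apply at the onset of every collision event, so that $L_{\text{up}}$ is a constant in $t$.
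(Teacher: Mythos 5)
Your proposal is correct in substance and shares the paper's skeleton --- decompose the collision process on the arm into maximal collision events separated by back-off periods, bound the expected event length by $L_{\text{up}} = \frac{1}{1-p_0}$ via geometric domination (using, exactly as the paper does, the fact that at least one participant enters each new event with persistence reset to $p_0$, so the per-round continuation probability $p_1p_2$ is at most $p_0$) --- but the key counting step is genuinely different. The paper lower-bounds the expected quiet-period lengths $\mE{Q_i}$ via the Edmundson--Madansky inequality for the concave map $x\mapsto x^\beta$, feeds these into the identity $t = \sum L_i + \sum Q_i$, and solves the resulting quadratic inequality in the number of episodes $m$ to get $m \leq \frac{2}{\sqrt{L_{\text{low}}}}t^{1-\beta/2}$, where $L_{\text{low}} = \frac{1}{1-p_0^2}$ is a lower bound on event length. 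You instead use the pathwise budget $\sum_i W_i \leq t$ (valid because the back-off intervals are disjoint and collision-free), combine it with $\mE{\sqrt{W_i}\mid T_i} = \tfrac{2}{3}T_i^{\beta/2} \geq \tfrac{2}{3}i^{\beta/2}$ through Cauchy--Schwarz and Jensen, and obtain $\mE{M(t)} = O\paren{t^{1/(1+\beta)}}$; since $\frac{1}{1+\beta} < 1-\frac{\beta}{2}$ for $\beta\in(0,1)$, this is actually a \emph{sharper} rate than the paper's, which you then relax to match the stated exponent. Two small caveats: the conditioning in your lower bound on $\mE{\sum_{i\leq M(t)}\sqrt{W_i}}$ needs the observation that $\set{M(t)\geq i}$ is measurable with respect to the history available before $W_i$ is drawn (so the conditional mean of $\sqrt{W_i}$ is unaffected), which you gesture at but should state; and your constants do not reproduce the paper's $\frac{2}{\sqrt{L_{\text{low}}}}$ --- your argument never needs the lower bound $L_{\text{low}}$ on event length at all --- which is harmless here since the lemma defers its constants to the proof, but means you are proving the bound with a different (and not obviously smaller) constant.
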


Once we have a bound for the pairwise, per-arm, number of collisions, we can bound the mean number of collisions for all users.
\begin{cor}
\label{cor:collisions}
The expected number of collisions between all users over all channels up till time $t$ is bounded:
\begin{align*}
  \mE{C\paren{t}} \leq \half N \paren{N-1} K \mE{C_p\paren{t}} \leq N^2 K \frac{L_{\text{up}}}{\sqrt{L_{\text{low}}}}t^{1-\beta/2}.
\end{align*}
Since the reward is bounded in $\sbrk{0,1}$, the expected regret acquired as a result of collisions up till time $t$ is bounded by the same value:
\begin{align*}
  \mE{R^C\paren{t}} \leq N \paren{N-1} K \mE{C_p\paren{t}} \leq C_1 N^2 K t^{1-\beta/2},
\end{align*}
where $C_1 = \frac{L_{\text{up}}}{\sqrt{L_{\text{low}}}}$.
\end{cor}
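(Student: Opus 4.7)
The corollary essentially collates \lemref{lem:collisions} across all pairs of users and all channels, so my approach is to expand $C(t)$ into a double sum and apply linearity of expectation together with the per-pair, per-channel bound from \lemref{lem:collisions}.

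Concretely, I would write
\begin{align*}
C(t) = \sum_{1\leq n<m\leq N}\sum_{k=1}^{K} C_p^{(n,m,k)}(t),
\end{align*}
where $C_p^{(n,m,k)}(t)$ counts collisions between users $n$ and $m$ on arm $k$. By the symmetric role of users and arms in MEGA, each term carries the same expectation bound as in \lemref{lem:collisions}, namely at most $\tfrac{2L_{\text{up}}}{\sqrt{L_{\text{low}}}}t^{1-\beta/2}$. There are $\binom{N}{2}=\half N(N-1)$ unordered pairs and $K$ channels, so linearity of expectation immediately yields $\mE{C(t)}\leq \half N(N-1)K\,\mE{C_p(t)}\leq N^2K\frac{L_{\text{up}}}{\sqrt{L_{\text{low}}}}t^{1-\beta/2}$, using $N-1\leq N$ to absorb a factor and reach the stated form.

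For the reward-loss bound $\mE{R^C(t)}$, the only extra ingredient is an accounting argument: a collision event involving $j\geq 2$ users on a single channel at a single time step incurs at most $j$ units of reward loss (rewards lie in $[0,1]$) but contributes exactly $\binom{j}{2}$ to the pairwise count, and $j\leq 2\binom{j}{2}$ for $j\geq 2$. Hence $R^C(t)\leq 2 C(t)$, which cancels the $\half$ and produces $\mE{R^C(t)}\leq N(N-1)K\mE{C_p(t)}\leq C_1 N^2K t^{1-\beta/2}$, with $C_1=\frac{L_{\text{up}}}{\sqrt{L_{\text{low}}}}$ (absorbing the factor $2$ from \lemref{lem:collisions}).

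The proof is otherwise routine; the only subtle point is the factor-of-two discrepancy between the two displayed bounds, which I would justify via the pairwise-vs-event accounting above. I do not foresee a serious obstacle, since all of the probabilistic work is already packaged in \lemref{lem:collisions} and the corollary is essentially a combinatorial counting exercise.
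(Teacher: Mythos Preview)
Your proposal is correct and follows exactly the paper's approach: the paper's own justification is a single sentence stating that ``each pair of users can collide on each arm,'' i.e., the same $\binom{N}{2}\cdot K$ counting you spell out. Your explicit accounting for the factor of two between the pairwise collision count and the reward loss (via $j\le 2\binom{j}{2}$ for $j\ge 2$) is in fact more careful than what the paper provides; the paper simply asserts the bound on $\mE{R^C(t)}$ without explaining why it picks up the extra factor relative to $\mE{C(t)}$.
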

\corref{cor:collisions} follows from \lemref{lem:collisions}, since each pair of users can collide on each arm, before the dictated ``quiet'' period, and so we obtain a bound on the expected regret accumulated due to collisions.


Next, we examine the expected regret caused by the unavailability of arms in $K^*$, denoted by $\mE{R^A\paren{t}}$.
The availability mechanism contributes to a user's regret if it marks all arms in $K^*$ as taken, causing the user to choose an arm $k\nin K^*$ until one of the arms in $K^*$ becomes available once again.

We compute an upper bound on the regret by analyzing the regret due to unavailability when the number of users is $N=2$. When there are more users, the regret is bounded by the worst case, in which all of them declare an optimal arm unavailable at the same time:
\begin{align*}
  \mE{R^A\paren{t}} \leq N\mE{R^A_2\paren{t}},
\end{align*}
where $\mE{R^A_2\paren{t}}$ is the availability regret accumulated up till time $t$ in the two user scenario for a single channel.

\begin{lem}\label{lem:avail}
The expected regret accumulated due to unavailability of optimal arms in the interval $\sbrk{T,t}$ is bounded:
\begin{align*}
  \mE{R^A_2\paren{t}} \leq C_3 t^\beta,
\end{align*}
where the constant $C_3$ is defined in the proof and $\beta$ is a parameter of MEGA.
\end{lem}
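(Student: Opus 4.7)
The plan is to leverage \lemref{lem:T} and \lemref{lem:collisions} to bound the total time that an optimal channel is marked unavailable to a user in the two-user, single-channel setting. First, I restrict attention to $t > T$, so that by \lemref{lem:T} both users have learned the correct ranking of the $N$-best arms with high probability; any regret accumulated before $T$ is absorbed into the additive constant because $T$ depends only on $K$, $N$, $\e$, and $\delta$.

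Next, I observe that each individual unavailability interval is deterministically bounded in length: if a user gives up on an arm $k$ at time $s$, she draws $t_{\text{next},k}\sim U\paren{\sbrk{s,s+s^\beta}}$, so the interval has length at most $s^\beta\leq t^\beta$. Thus the availability regret accumulated during a single back-off event is at most $t^\beta$ (using the fact that per-step regret is bounded by $1$ since rewards lie in $\sbrk{0,1}$).

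The core of the argument is then to bound the expected number of back-off events triggered on the channel in $\sbrk{T,t}$. Every back-off is initiated by a collision on which the user draws ``give up'' with probability $1-p$, so the back-off count is dominated by the collision count. In the post-learning regime, collisions on a single channel arise essentially only from random exploration: since $\e_\tau = O\paren{1/\tau}$ for $\tau>T$, the expected number of exploration events by one user in $\sbrk{T,t}$ is $\int_T^t \e_\tau\,d\tau = O\paren{\log(t/T)}$, each of which hits the contested channel with probability $1/K$. Combining with the fact that each collision yields a back-off only with probability $1-p$ (bounded away from $0$ when $p$ has just reset to $p_0$) gives an $O\paren{\log t}$ bound on the number of back-offs on this channel; summing the per-event bound $t^\beta$ and absorbing the logarithmic factor and algorithm-dependent constants (involving $p_0$, $\alpha$, $c$, and $d$) into $C_3$ yields $\mE{R^A_2\paren{t}}\leq C_3 t^\beta$.

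The hard part will be making Step 3 fit the stated bound cleanly. A direct application of the collision count $\mE{C_p(t)}=O\paren{t^{1-\beta/2}}$ from \lemref{lem:collisions} multiplied by the per-event duration $t^\beta$ gives $O\paren{t^{1+\beta/2}}$, which is superlinear and useless; the claimed $t^\beta$ rate therefore requires the tighter post-learning count above, which in turn relies on controlling the rare event that a user actively engaged in exploiting $k\in K^*$ abandons it (and on the persistence probability $p$ growing back to near $1$ between explorations). Verifying that these heuristics can be turned into a rigorous expectation bound, and identifying the right constants inside $C_3$, is where the bulk of the technical work lies.
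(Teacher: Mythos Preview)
Your Step~3 rests on a false premise. After time $T$ both users agree on the ranking, so in \emph{exploitation} mode they both target the \emph{same} best arm whenever it is available to them; collisions on that arm are driven by exploitation, not by the $O(1/\tau)$ exploration. The dynamics are a perpetual cycle: the loser marks the arm unavailable, waits out her back-off, returns, collides again, and so on. Hence the number of collision episodes on the contested arm is of order $t^{1-\beta/2}$ (this is exactly what \lemref{lem:collisions} captures), not $O(\log t)$. Your own sanity check already flagged that $t^{1-\beta/2}\cdot t^\beta$ is superlinear; the proposed escape route (``only exploration causes collisions'') does not hold. Separately, even if the count were $O(\log t)$, multiplying by $t^\beta$ gives $t^\beta\log t$, and a logarithmic factor cannot be ``absorbed'' into a constant $C_3$.

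The idea you are missing, and the one the paper's proof is built around, is that availability regret is \emph{not} paid at every back-off. It is paid only when \emph{both} users give up simultaneously; a unilateral cession leaves the incumbent on the arm and costs nothing on the availability side. The persistence mechanism makes simultaneous cessions rare: the incumbent has been sampling successfully for a long stretch, so her persistence $p$ is close to $1$, while the challenger's has just been reset to $p_0$. The paper formalizes this by lower-bounding $\mP{A}$ (unilateral cession following a simultaneous one) and upper-bounding $\mP{B}$ (simultaneous cession following a unilateral one), showing both behave like $O(t^{-\beta/2})$. It then partitions $\sbrk{T,t}$ into intervals of length $t^{\beta/2}$, bounds the probability that any interval contains a simultaneous cession by $O(t^{-\beta/2})$, and sums to get $t^{\beta/2}\cdot t^{1-\beta/2}\cdot O(t^{-\beta/2}) = O(t^{1-\beta/2})$, which together with an initial $t^\beta$ term gives the stated $C_3 t^\beta$ for $\beta>2/3$. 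Without the simultaneous/unilateral distinction and the persistence analysis, the bound cannot be reached.
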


\corref{cor:availability} follows from \lemref{lem:avail}:

\begin{cor}
\label{cor:availability}
  The expected regret contributed by the availability-detection mechanism up till time $t$ is bounded:
  \begin{align*}
    \mE{R^A\paren{t}} \leq NKT + NK C_3 t^\beta.
  \end{align*}
\end{cor}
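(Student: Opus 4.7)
The plan is to split the time horizon into two regimes and handle each separately. For the learning phase $t \leq T$, where $T$ is the time identified in \lemref{lem:T}, the guarantees of \lemref{lem:avail} do not apply because they rely on users having learned the correct ranking of the $N$-best arms. I would therefore bound the availability regret in this phase by a crude worst-case estimate: each of the $N$ users can lose at most $1$ per time step (rewards are in $[0,1]$), and summing trivially over users, channels and time steps yields an $NKT$ upper bound, which absorbs anything that happens in the transient regime.

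For the post-learning phase $t > T$, I would apply \lemref{lem:avail} channel by channel and user by user. \lemref{lem:avail} gives $\mE{R^A_2(t)} \leq C_3 t^\beta$ for a single channel under the two-user scenario. The extension to $N$ users on a given channel follows from the worst-case reasoning already used in the text preceding \lemref{lem:avail}: simultaneous unavailability declarations by more users can only scale the pairwise estimate by a factor of at most $N$. Summing over the $K$ channels then contributes an additional factor of $K$ (a loose overcount, since the effect is really felt only on arms in $K^*$, but sufficient for sub-linearity). This produces the bound $NK C_3 t^\beta$ on the availability regret accumulated during $[T,t]$.

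Adding the two regimes together gives
\begin{align*}
\mE{R^A(t)} \leq NKT + NK C_3 t^\beta,
\end{align*}
which is exactly the statement of the corollary. The argument is essentially a bookkeeping exercise on top of \lemref{lem:avail}, and the only delicate point is justifying that the multiplicative factors $N$ (for scaling pairwise collision behavior to $N$-wise behavior on a fixed channel) and $K$ (for aggregating across channels) are legitimate worst-case inflations rather than undercounts. The main obstacle, modest as it is, is making sure the transient bound $NKT$ dominates all contributions for $t\leq T$ uniformly, so that no availability-related regret leaks out of the combined expression; this follows because a single user cannot lose more than $1$ per round regardless of how many channels are marked unavailable, making $NKT$ a safe overestimate.
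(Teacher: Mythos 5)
Your proposal is correct and matches the paper's (implicit) argument: the paper gives no separate proof of \corref{cor:availability}, stating only that it follows from \lemref{lem:avail} together with the worst-case scaling $\mE{R^A\paren{t}} \leq N\mE{R^A_2\paren{t}}$ and a sum over the $K$ channels, plus a trivial bound of $NKT$ for the transient before the ranking is learned. Your decomposition into the pre-$T$ and post-$T$ regimes and your justification of the $N$ and $K$ factors as worst-case inflations is exactly this bookkeeping.
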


Our next goal is to bound the regret due to exploration, which is dictated by the $\e$-greedy approach adopted in MEGA.

\begin{lem}\label{lem:explore}
  The expected regret accumulated by all users employing the MEGA algorithm due to random exploration up till time $t$ is bounded $\forall t > m$:
  \begin{align*}
    \mE{R^E\paren{t}} \leq Nm + \frac{cK^2N}{d^2{K-1}}\log t,
  \end{align*}
  where $c,d$ are parameters of the MEGA algorithm and $m = \frac{cK^2}{d^2\paren{K-1}}$.
\end{lem}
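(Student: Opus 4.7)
The plan is to adapt the standard $\e$-greedy exploration-regret argument to the multi-user setting, exploiting only that rewards lie in $[0,1]$ and that once $t>T$ (from \lemref{lem:T}) exploitation rounds do not contribute to exploration regret.

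First I would reduce the problem to counting exploration rounds. Since rewards are bounded in $[0,1]$, a single exploration round costs a given user at most $1$ in regret against the optimal per-round reward. Therefore, writing $X_n(t)$ for the number of exploration rounds used by user $n$ up to time $t$, we get $\mE{R^E(t)} \leq \sum_{n=1}^{N}\mE{X_n(t)}$, so it suffices to bound $\mE{X_n(t)}$ for one user and multiply by $N$.

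Next I would compute $\mE{X_n(t)}$ by summing the per-round exploration probability $\e_\tau = \min\{1, m/\tau\}$, with $m = cK^2/(d^2(K-1))$. The key observation is that the minimum is attained at $1$ exactly when $\tau \leq m$ and equals $m/\tau$ for $\tau > m$. Splitting the sum at $m$ gives, for $t > m$,
\begin{align*}
\mE{X_n(t)} = \sum_{\tau=1}^{t}\e_\tau \;\leq\; m + \sum_{\tau=m+1}^{t}\frac{m}{\tau} \;\leq\; m + m\int_{m}^{t}\frac{d\tau}{\tau} \;=\; m + m\log(t/m) \;\leq\; m + m\log t,
\end{align*}
using the standard integral bound on the tail of the harmonic series and the fact that $\log(t/m) \leq \log t$ when $m \geq 1$.

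Summing this bound over $n = 1,\ldots,N$ and substituting $m = cK^2/(d^2(K-1))$ yields
\begin{align*}
\mE{R^E(t)} \leq Nm + \frac{cK^2 N}{d^2(K-1)}\log t,
\end{align*}
which is the claimed inequality. There is no serious obstacle here: the argument is essentially the classical exploration-count bound for $\e$-greedy, and the multi-user extension is immediate because exploration events of different users contribute additively to $R^E$ and each contributes at most $1$ per round. The only point worth verifying carefully is that on an exploration round the user always samples some arm (rather than staying silent because $A = \emptyset$) with bounded-by-$1$ reward loss; when $A = \emptyset$ the user does not transmit, which can only decrease the exploration regret and so is safely absorbed into the same bound.
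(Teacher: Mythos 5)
Your proposal is correct and follows essentially the same route as the paper's proof: bound the per-user exploration regret by the sum of exploration probabilities, split the sum at $m$ where $\e_\tau$ drops below $1$, bound the tail by the integral of $1/\tau$ to get the $\log t$ term, and multiply by $N$. Your added remark about the $A=\emptyset$ case is a harmless extra check that the paper omits.
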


Based on the lemmas and corollaries above, we have the following regret bound for the MEGA algorithm:
\begin{thm}
  Assume a network consisting of $N$ users playing a single $K$-armed stochastic bandit, $N\leq K$. If all users employ the policy of MEGA, the system-wide regret is bounded for all $t>\max\paren{m,T}$ as follows:
  \begin{align*}
    \mE{R\paren{t}} \leq &\;C_1 N^2 K t^{1-\beta/2} + NKT + NK C_3 t^\beta + Nm + \frac{cK^2N}{d^2{K-1}}\log t \\
                =& \;O\paren{t^{1-\beta/2} + \log t + t^\beta}.
  \end{align*}
\end{thm}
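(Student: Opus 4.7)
The plan is to obtain the theorem by directly summing the three component bounds already established, so the real work is simply to justify the decomposition
\[
\mE{R(t)} \leq \mE{R^C(t)} + \mE{R^E(t)} + \mE{R^A(t)},
\]
and then to substitute the pieces and read off the dominant term.

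First I would argue the decomposition. Fix $t > \max(m,T)$. By \lemref{lem:T}, for $t>T$ every user has already learned an $\e$-correct ranking of the $N$ best arms with high probability, so any instantaneous loss relative to the system optimum $t\sum_{k\in K^*}\mu_k$ must be attributable to one of three mutually exclusive events at each round for each user: (i) a collision on the arm she selects, forcing her reward to zero; (ii) a greedy exploitation round interrupted because all arms in $K^*$ are currently marked unavailable to her, forcing her to play some $k\notin K^*$ or to stay silent; or (iii) a round in which she chooses to explore according to $\e_t$ and lands on a suboptimal arm. Since rewards lie in $[0,1]$, each unit of suboptimality contributes at most $1$ to the regret, so the per-round loss is bounded by the sum of the indicator regrets of these three events, yielding the displayed inequality. (The pre-$T$ rounds contribute at most $NKT$, which is already absorbed into $\mE{R^A(t)}$ as stated in \corref{cor:availability}.)

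Next I would plug in the three bounds. From \corref{cor:collisions}, $\mE{R^C(t)} \leq C_1 N^2 K t^{1-\beta/2}$; from \corref{cor:availability}, $\mE{R^A(t)} \leq NKT + NK C_3 t^\beta$; and from \lemref{lem:explore}, $\mE{R^E(t)} \leq Nm + \frac{cK^2 N}{d^2(K-1)}\log t$. Summing the three and noting that the additive constants $NKT$ and $Nm$ are absorbed into the stated expression gives the first displayed inequality of the theorem exactly as written. The asymptotic conclusion $O\paren{t^{1-\beta/2} + \log t + t^\beta}$ then follows immediately by identifying the three $t$-dependent terms and treating $N,K,c,d,T,m,C_1,C_3$ as constants in $t$.

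There is essentially no hard step: the bulk of the technical effort has been done in the preceding lemmas. The only point needing a sentence of care is the exhaustiveness of the three loss sources, and in particular that the availability regret correctly accounts for the ``burn-in'' phase $t \le T$, so that for $t > T$ we can safely invoke \lemref{lem:T} without introducing an additional regret component. Once that observation is recorded, the theorem is a one-line combination, and one finally remarks that for $\beta \in (0,1)$ the exponents $1-\beta/2$ and $\beta$ are both strictly less than $1$, so the total regret is sublinear, confirming the no-regret claim advertised at the start of the section.
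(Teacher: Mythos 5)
Your proposal is correct and follows exactly the route the paper takes: the paper itself states the decomposition $\mE{R\paren{t}} \leq \mE{R^C\paren{t}} + \mE{R^E\paren{t}} + \mE{R^A\paren{t}}$ (justified by the boundedness of rewards in $\sbrk{0,1}$) and then obtains the theorem by summing the bounds from \corref{cor:collisions}, \corref{cor:availability} and \lemref{lem:explore}, with the $NKT$ term in \corref{cor:availability} absorbing the burn-in phase before $T$, just as you note. Your additional remarks on the exhaustiveness of the three loss sources and the sublinearity for $\beta\in\paren{0,1}$ are consistent with the paper's discussion following the theorem.
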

The dominant term in the regret bound above depends on the value of $\beta$. For $\beta > 2/3$, the term $t^\beta$ dominates the bound, while for smaller values the dominant term is $t^{1 - \beta/2}$. This tradeoff is intuitive - large values of $\beta$ correspond to longer ``quiet'' intervals, reducing the regret contributed by collisions. However, such long intervals also result in longer unavailability periods, increasing the regret contributed by the availability mechanism. Optimizing over $\beta$ yields $\beta = 2/3$, and so the corresponding regret bound is
\begin{align*}
  R\paren{t} \leq O\paren{t^{\frac{2}{3}}}.
\end{align*}

The regret bounds for the algorithms proposed in \cite{Anandkumar2011} and \cite{Kalathil2012b} are $O\paren{\log t}$ and $O\paren{\log^2 t}$, respectively. It is worth noting that the constants in the bound provided in \cite{Anandkumar2011} are very large, as they involve a binomial coefficient which depends on the numbers of users and channels. Also, the assumptions our algorithm makes are much more strict. Reference \cite{Anandkumar2011} requires knowing the number of users, and \cite{Kalathil2012b} requires ongoing communication between users, through the Bertsekas auction algorithm. Reference \cite{Anandkumar2011} does propose an algorithm which estimates the number of users, but its regret bound is not logarithmic, and rather vague. In addition, the empirical results our algorithm provides are considerably better (see \secref{sec:experiments}).

\section{Dynamic number of users}
\label{sec:dynamic_users}
So far, we have focused on several traits of the MEGA algorithm: it does not require communication, cooperation or coordination among users, and it does not assume prior knowledge of the number of users. Simply put, the user operates as though she were the only user in the system, and the algorithm ensures this approach does not result in the users' interfering with each other once the system reaches a ``steady state''.

However, communication networks like the ones we wish to model often evolve over time - users come and go, affecting the performance of other users by their mere presence or absence. As mentioned in \secref{sec:intro}, the algorithms proposed in \cite{Anandkumar2011} attempt to address scenarios similar to ours. However, they rely on either \emph{knowing} or \emph{estimating} the number of users. Thus, a varying number of users is beyond their scope.

It is evident from the experiments in \secref{sec:experiments} that the MEGA algorithm is applicable not only to a fixed number of users, but also in the case that the number of active users in the network varies over time. To the best of our knowledge, this is the \emph{only} algorithm that is able to handle such a setup.

We defer a thorough analysis of the dynamic scenario to our future work. However, a simple performance guarantee can be obtained for the event in which a user leaves the network. Let us begin by defining the regret. Let $N\paren{t}$ denote the number of users in the network at time $t$. Accordingly, $K^*\paren{t}$ is the set of $N\paren{t}$-best arms. The series $t_1,t_2,t_3\ldots$ denotes change events in the number of users - arrival or departure. Time intervals during which the number of users is fixed are denoted by $T_i\triangleq\sbrk{t_{i-1},t_i-1}$, with $t_0 \triangleq 0$. Following the definition of the regret introduced in \secref{sec:framework}, the regret for the dynamic scenario is
\begin{align*}
  R\paren{t} \triangleq \sum_{T_i}\sum_{k\in K^*\paren{t_{i-1}}}\mu_k - \sum_{\tau\in T_i}\sum_{n\in N\paren{t_{i-1}}}\mE{r_n\paren{\tau}},
\end{align*}
where we allow a slight abuse of notation for the sake of readability.

Let us assume that a user $n$ leaves the network at some time $t$, and that the number of users without him is $N\paren{t}$. We also assume that the users had reached a steady state before this departure, i.e., the optimal configuration was being sampled with high probability. Unless user $n$ was sampling the $N\paren{t}+1$-best arm, regret will start building up at this point. Based on \propref{prop:dynamic1}, which follows directly from the definition of the MEGA algorithm, we bound the regret accumulated until the system ``settles down'' in the new optimal configuration, in \propref{prop:dynamic2}.

\begin{figure}[ht]
\vskip 0.2in
\begin{center}
\centerline{\includegraphics[width=0.5\columnwidth]{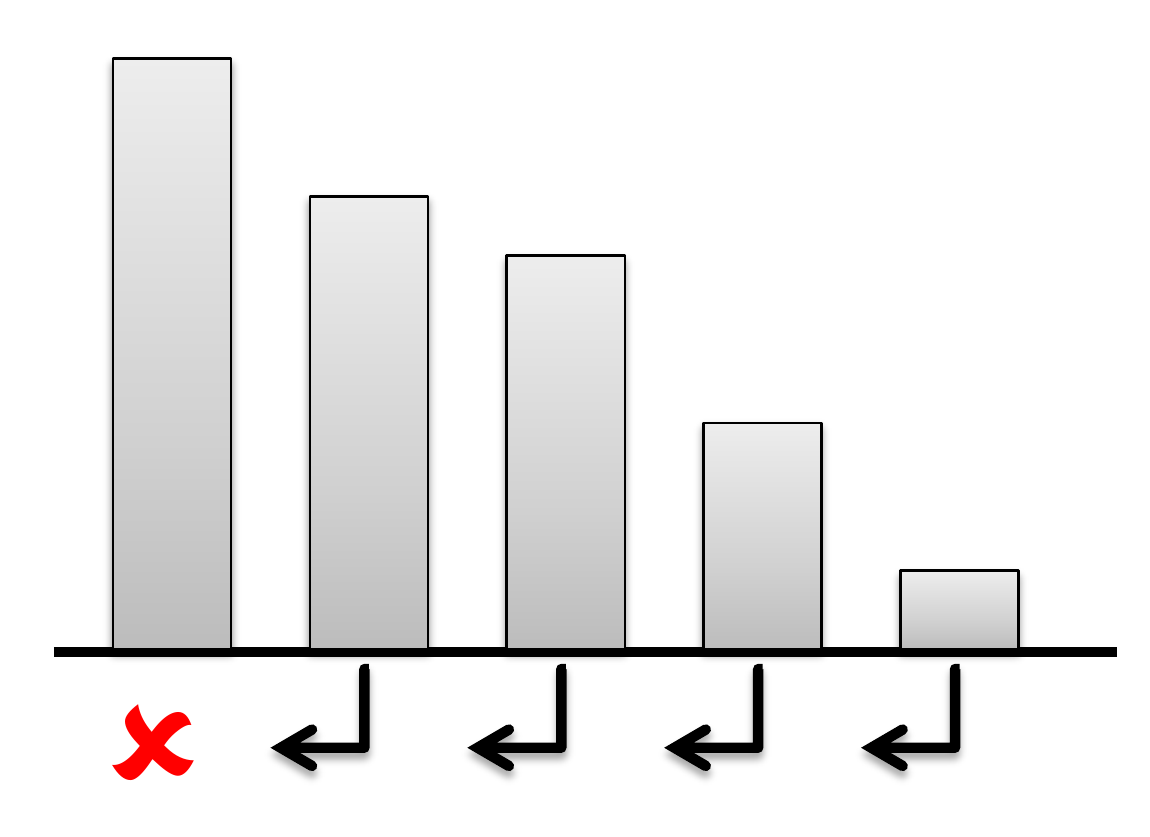}}
\caption{Worst case occupation of new optimal configuration after a user has left the network.}
\label{fig:dynamic}
\end{center}
\vskip -0.2in
\end{figure}

\begin{prop}
\label{prop:dynamic1}
  Let $K^*_{n}$ denote the set of $n$-optimal arms.
  For an arm such that $k\in K^*_{n}$ and also $k\in K^*_{n-1}$, if a user occupying $k$ becomes inactive at time $t$, $k$ will return to the set of regularly sampled arms within a period of no more than $t^\beta$, with a probability greater than $1-\e_t$.
\end{prop}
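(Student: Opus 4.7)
The plan is to trace exactly what MEGA does in every remaining user's state during the interval $\sbrk{t, t + t^\beta}$. First, I would show that by the end of this window every remaining user has $k$ in her available set $A$. The only unavailability marker a user could currently hold on $k$ was set during a past give-up at some time $\tau \le t$, and the rule $t_{\text{next},k} \sim U(\sbrk{\tau, \tau + \tau^\beta})$ guarantees $t_{\text{next},k} \le \tau + \tau^\beta \le t + t^\beta$, so the marker has expired by then.

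Next I would argue that some user captures $k$ in her first post-expiry exploit step. Under the steady-state hypothesis the $n - 1$ remaining users occupy $n - 1$ distinct arms of $K^*_n$, and the empirical ranking on the top arms is $\e$-correct. Since $k \in K^*_{n-1}$ and exactly one occupied arm lies in $K^*_n \setminus K^*_{n-1}$, namely the $n$-th best arm, the user on that arm has $\hat{\mu}_k$ strictly greater than the empirical mean of her current arm from the moment $k$ enters her $A$. At the first step after her personal $t_{\text{next},k}$ expires she enters the exploit branch with probability $1 - \e_t$; MEGA then selects the empirically best arm in $A$, which is $k$, and because $k$ is unoccupied no collision occurs, so $k$ rejoins the set of regularly sampled arms.

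Combining the two observations, the event ``$k$ is not regularly sampled within $t^\beta$'' is contained in the single-step event that this user explores rather than exploits at the critical step, which has probability $\e_t$, giving the claimed confidence $1 - \e_t$. The main subtlety I anticipate is making the ``critical step'' well defined: one must rule out that nothing happening in the interim — a stray exploration by another user, or a residual collision from earlier dynamics — removes $k$ from the relevant $A$ or causes a confounding collision. Both are handled by observing that exploration alone does not set a new unavailability flag on $k$ (a flag is only set upon a collision, and $k$ has no occupant to collide with) and that any premature capture of $k$ by a different user only fulfills the conclusion. The subsequent cascade that fills the arm vacated by the capturing user is the scope of \propref{prop:dynamic2} and is not needed here.
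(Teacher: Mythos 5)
Your argument follows the same route as the paper's proof: bound the expiry of every remaining user's unavailability marker on $k$ by $t+t^\beta$ (the paper phrases this as the worst case of a maximal draw made just before the departure), and then charge the failure probability to a single explore-versus-exploit decision of a user currently on a worse arm, yielding the $1-\e_t$ confidence. Your write-up is somewhat more explicit than the paper's — notably in pinning down the capturing user as the occupant of the $n$-th best arm via the $\e$-correct ranking, and in noting that a premature capture by another user only helps — but it is essentially the same proof.
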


\begin{prop}
\label{prop:dynamic2}
The regret accumulated in the period between user $n$'s departure at time $t$ and the new optimal configuration's being reached is bounded by
\begin{align*}
R\paren{t} \leq \frac{2^{\paren{\beta+1}\paren{N\paren{t}-1}}-1}{2^{\beta+1}-1}t^\beta = O\paren{t^\beta}.
\end{align*}
\end{prop}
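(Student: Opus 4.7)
The plan is to combine \propref{prop:dynamic1} with an iterative cascade argument that tracks the time needed for the system to recover the optimal configuration after user $n$'s departure. Let $k_0$ denote the arm user $n$ was occupying just before leaving at time $t$. If $k_0 \nin K^*_{N\paren{t}}$ the new optimal configuration is already in place and no regret accrues; otherwise $k_0 \in K^*_{N\paren{t}} \cap K^*_{N\paren{t}+1}$, so \propref{prop:dynamic1} applies directly and guarantees that some other user occupies $k_0$ within a period of at most $t^\beta$ with probability exceeding $1-\e_t$.

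The worst case arises when the user filling $k_0$ was itself already sampling another optimal arm $k_1 \in K^*_{N\paren{t}}$, leaving $k_1$ empty and forcing the same argument to be applied again at a later time. Iterating, the ``vacancy'' can propagate through the set of optimal arms up to $N\paren{t}-1$ times before reaching the unique user who, just after the departure, was on the (now suboptimal) $\paren{N\paren{t}+1}$-th best arm; at that point the cascade terminates and the new optimal configuration is in place. To turn this into a quantitative regret bound, let $t_i$ denote the time at the start of cascade step $i$ (with $t_0 = t$) and $w_i$ the corresponding waiting time. Then \propref{prop:dynamic1} gives $w_i \leq t_i^\beta$, and trivially $t_{i+1} = t_i + w_i$; since $\beta<1$ and $t$ is large, $t_i^\beta \leq t_i$ throughout the cascade, so $t_{i+1} \leq 2 t_i$ and hence $t_i \leq 2^i t$.

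Summing the per-step regret contributions over $i = 0, 1, \ldots, N\paren{t}-2$ produces a geometric series whose closed form matches the stated bound. The main obstacle is obtaining the precise ratio $2^{\beta+1}$ rather than the more immediate $2^\beta$ that comes from $w_i \leq 2^{i\beta} t^\beta$ alone: the extra factor of $2^i$ per step has to be absorbed from a careful accounting of the reward loss during each cascade step, since both the soon-to-be-emptied arm and the currently empty arm can contribute to regret during the overlap, and the effective ``penalty multiplier'' picks up a constant factor at each shift in the chain. Once this accounting is in place, the closed-form geometric sum yields $\frac{2^{\paren{\beta+1}\paren{N\paren{t}-1}}-1}{2^{\beta+1}-1}t^\beta$, and the overall rate $O\paren{t^\beta}$ is immediate since $N\paren{t}$ is finite.
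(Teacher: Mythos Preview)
Your cascade setup matches the paper's exactly: the paper defines $\tau_0=t$ and $\tau_i=\bigl(\sum_{j=0}^{i-1}\tau_j\bigr)^\beta$, which is precisely your $w_{i-1}=t_{i-1}^\beta$ with $t_{i-1}=\sum_{j<i}\tau_j$. Your bound $t_{i+1}\le 2t_i$ (hence $w_i\le 2^{i\beta}t^\beta$) is correct, and summing it as a geometric series with ratio $2^\beta$ already gives $R(t)\le \frac{2^{\beta(N(t)-1)}-1}{2^\beta-1}\,t^\beta$, which is \emph{tighter} than the bound in the proposition. The paper instead applies the inequality $(x+y)^\beta\le 2^\beta x^\beta+2^\beta y^\beta$ to the recursion for $\tau_i$, obtaining $\tau_i\le 2^\beta(\tau_{i-1}+\tau_{i-1}^\beta)\le 2^{\beta+1}\tau_{i-1}$; the extra factor of $2$ in the ratio is an artifact of that cruder step, not something you need to reproduce.

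Where your write-up goes wrong is the ``main obstacle'' paragraph. There is no obstacle: your $2^\beta$ ratio already proves the proposition (and more). The explanation you offer for the missing factor --- that ``both the soon-to-be-emptied arm and the currently empty arm can contribute to regret during the overlap'' --- is not supported by the model: at each instant during the cascade exactly one optimal arm is vacant, so the per-step regret is at most $w_i$, not $2w_i$, and no compounding penalty multiplier arises. Drop that paragraph, state your geometric sum with ratio $2^\beta$, and observe that it is bounded above by the stated expression since $2^\beta<2^{\beta+1}$. (A minor point: the worst-case cascade has $N(t)$ shifts, not $N(t)-1$, since all $N(t)$ remaining users may upgrade; this does not affect the $O(t^\beta)$ conclusion.)
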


\propref{prop:dynamic2} follows from \propref{prop:dynamic1} and from the worst case analysis described in \figref{fig:dynamic}: if the freed arm was the best one, and the user sampling the second-best arm re-occupied it, then the second-best arm would be left to be occupied, etc.. In the worst case, the time intervals before users ``upgrade'' their arms are back-to-back, creating a series of the form $t^\beta, \paren{t+t^\beta}^\beta,\ldots$. For detailed proofs of these propositions see the supplementary material.

\propref{prop:dynamic2} shows that for a sufficiently low departure rate, regret remains sub-linear even in the scenario of a dynamic number of users. Clearly, frequent changes in the number of users will result in linear regret for practically any distributed algorithm, including ours.
\section{Experiments}
\label{sec:experiments}
Our experiments simulate a cognitive radio network with $K$ channels and $N$ users. The existence of primary users is manifested in the differences in expected reward yielded by the channels (i.e., a channel taken by a primary user will yield small reward). Over the course of the experiment, the secondary users learn the characteristics of the communication channels and settle into an orthogonal, reward-optimal transmission configuration.

The first experiments concern a fixed number of users, $N$, and channels, $K$. We assume channel rewards to be Bernoulli random variables with expected values $\bm{\mu} = \paren{\mu_{1},\mu_{2},\ldots,\mu_{K}}$. The initial knowledge users have is only of the number of channels.

Once again, we stress that our users \emph{do not} communicate among themselves, nor do they receive external control signals. Their only feedback is the instantaneous reward and an instantaneous collision indicator.

We begin by showing that straightforward application of classic bandit algorithms does not suffice in this case. \figref{fig:regretUCB} and \figref{fig:collisionsUCB} present simulation results for a basic scenario in which $N=K=2$. Even in this rather simple case, the KL-UCB and $\e$-greedy algorithms fail to converge to an orthogonal configuration, and the number of collisions between users grows linearly with time. The experiment was repeated 50 times.

\begin{figure}[ht]
\label{fig:basicUCB}
\begin{center}
  \subfloat[Average regret over time]{\includegraphics[width=0.5\textwidth]{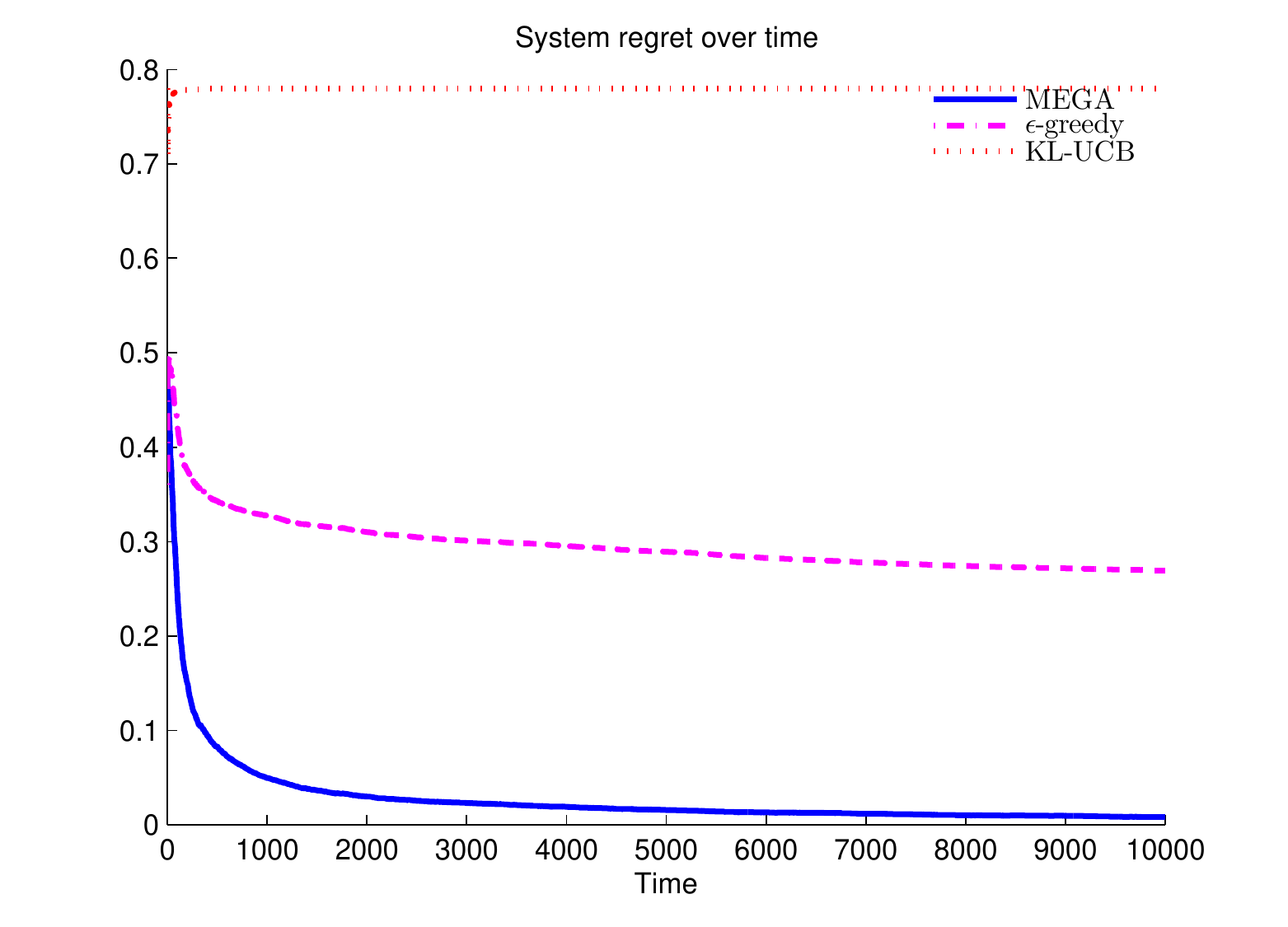}\label{fig:regretUCB}}
  \subfloat[Collisions over time]{\includegraphics[width=0.5\textwidth]{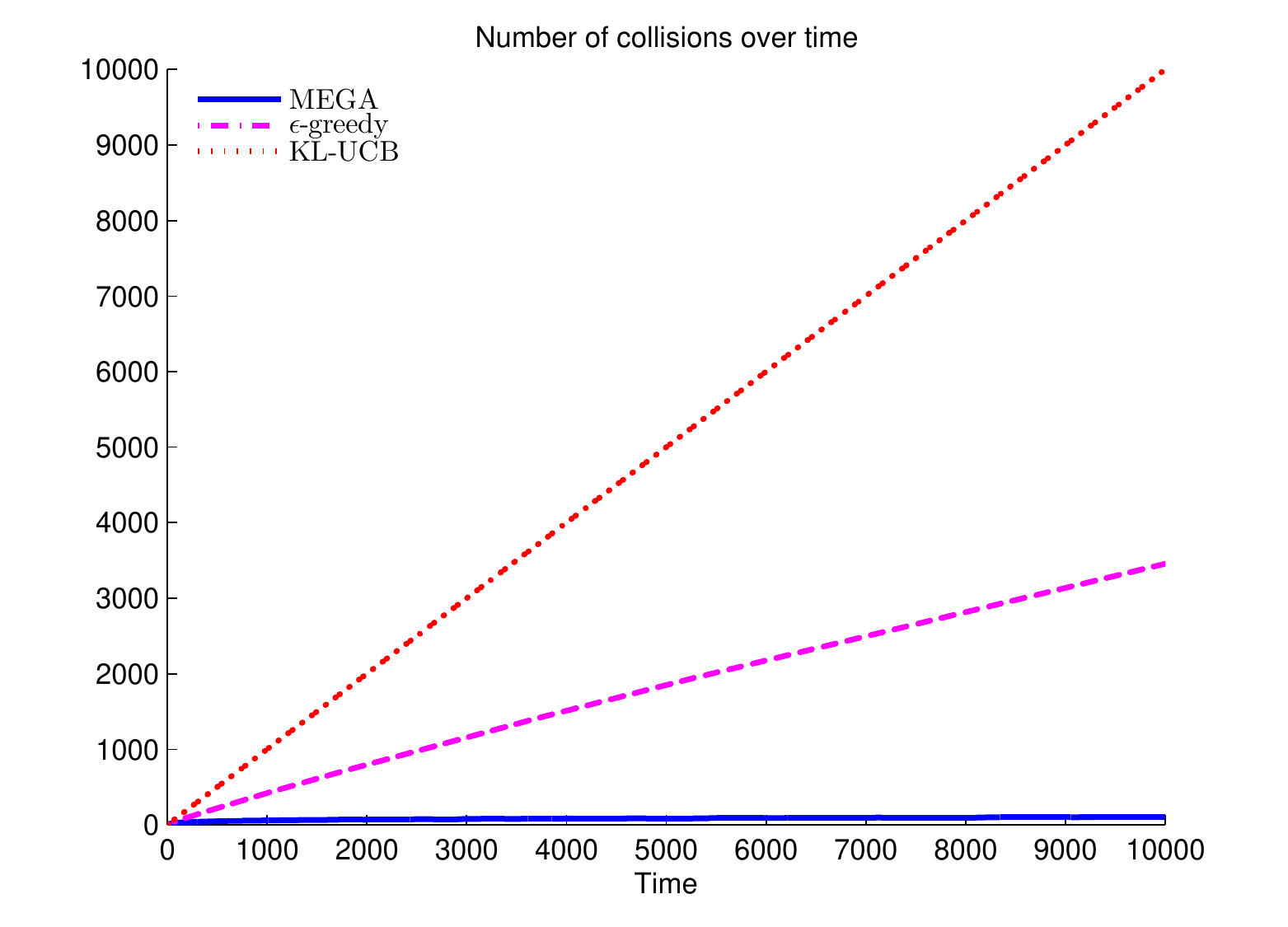}\label{fig:collisionsUCB}}
  \caption{KL-UCB, $\e$-greedy and MEGA performance in basic scenario}
\end{center}
\end{figure}

Having demonstrated the need for an algorithm tailored to our problem, we compare the performance of our algorithm, MEGA, with the $\rho^{\text{RAND}}$ algorithm, proposed in \cite{Anandkumar2011}. \figref{fig:regret_fixed} displays the average regret over time and \figref{fig:collisions_fixed} displays the cumulative number of collisions over time, averaged over all users. An important note is that in our experiments we provide $\rho^{\text{RAND}}$ with the exact number of users, as it requires. The MEGA algorithm does not require this input. We did not implement the algorithm $\rho^{\text{EST}}$ \cite{Anandkumar2011}, as its pseudo-code was rather obscure.

The set of parameters used for MEGA was determined by cross validation: $c = 0.1, p_0 = 0.6, \alpha = 0.5, \beta = 0.8$. The value of $d$ as dictated by the $\e$-greedy algorithm should be $d\leq\Delta = \mu_{N-\text{best}} - \mu_{\paren{N-1}-\text{best}}$. Calculating this value requires prior knowledge of both the number of users and the channels' expected rewards. In order to avoid this issue, we set $d = 0.05$ and avoided distributions for which this condition does not hold. The algorithm $\rho^{\text{RAND}}$ is parameter-free, as it is a modification of the UCB1 algorithm \cite{Auer2002a}.

\begin{figure}\label{fig:fixed}
\begin{center}
  \subfloat[Average regret over time]{\includegraphics[width=0.5\textwidth]{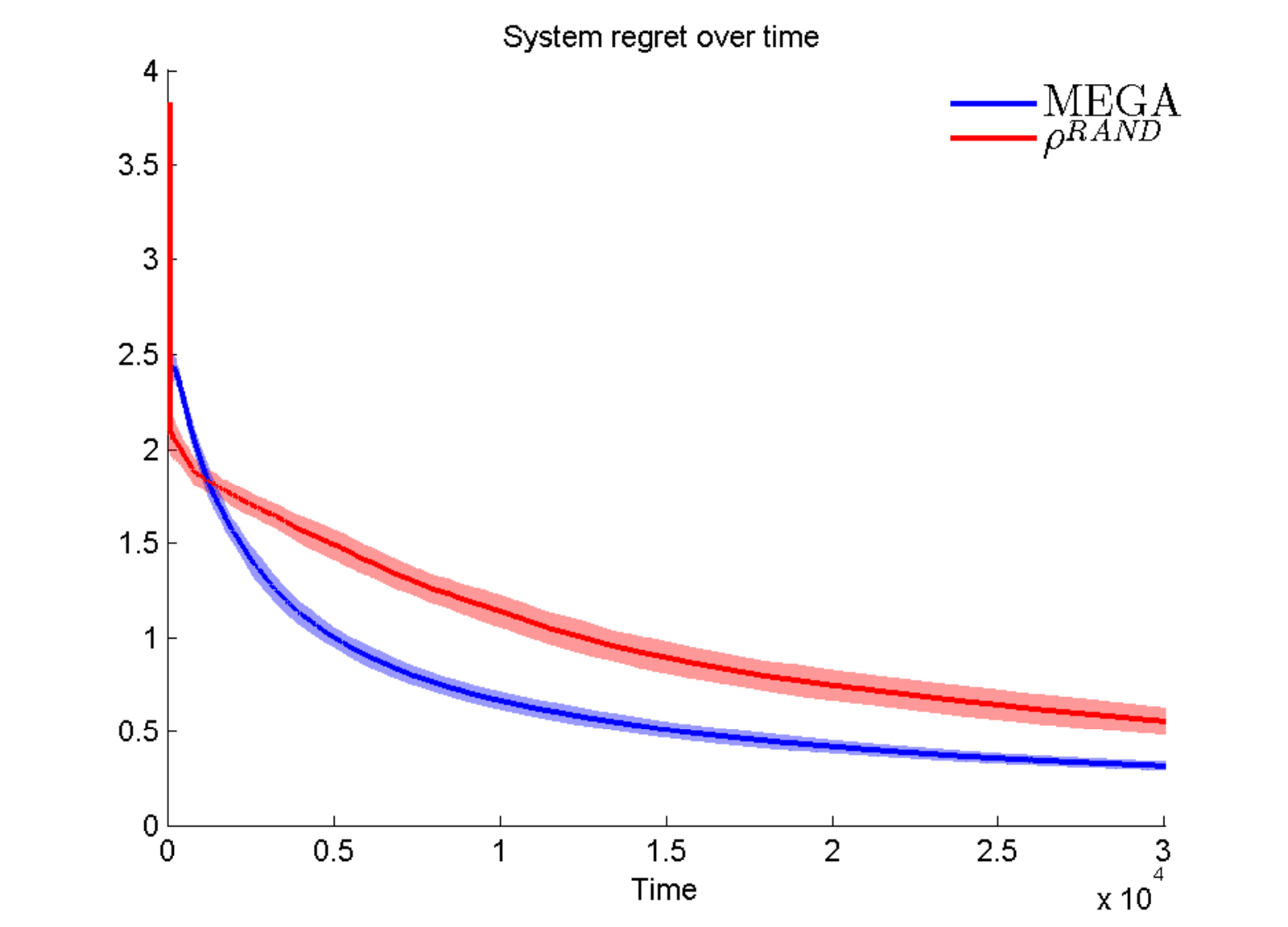}\label{fig:regret_fixed}}
  \subfloat[Collisions over time]{\includegraphics[width=0.5\textwidth]{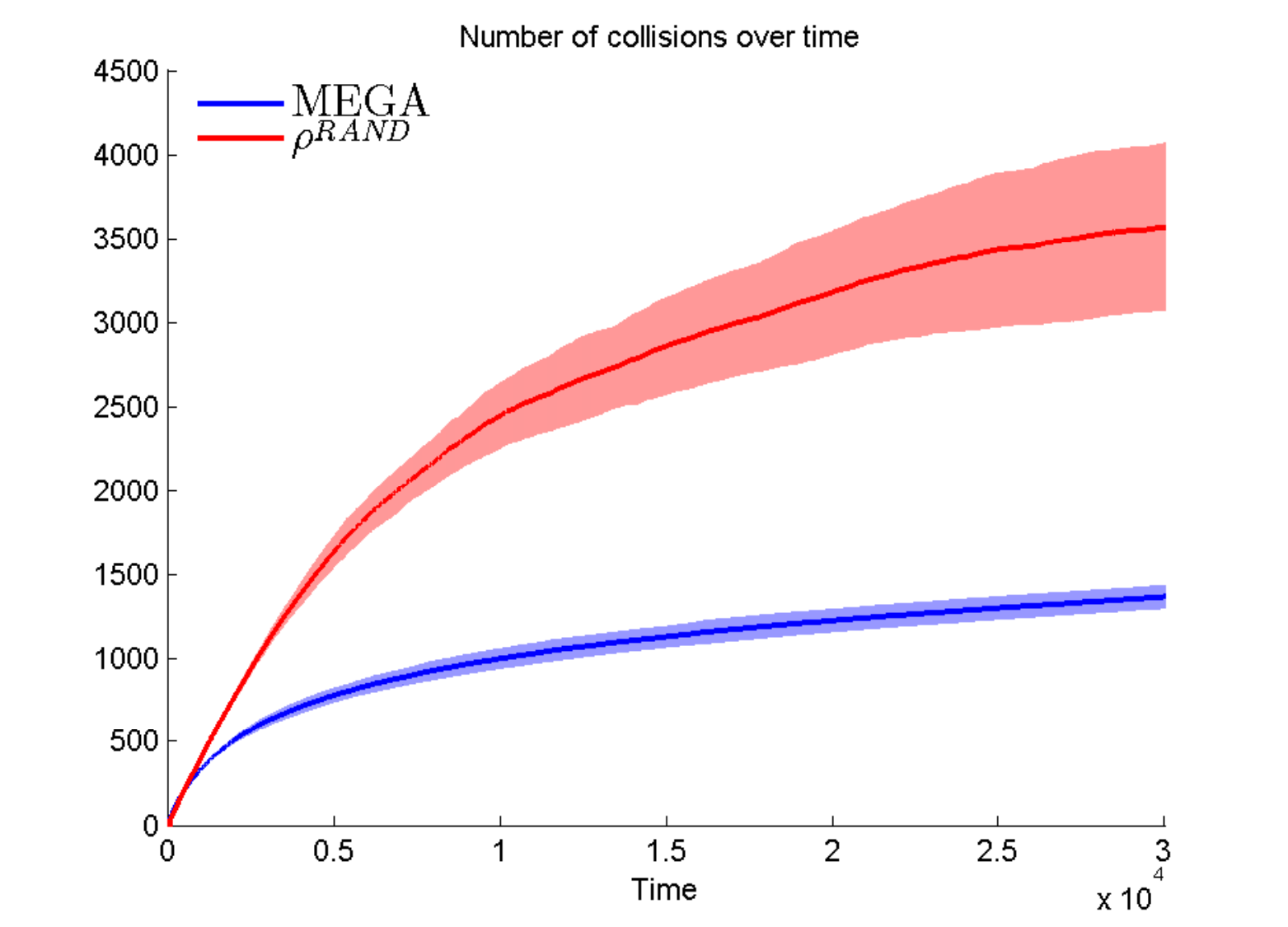}\label{fig:collisions_fixed}}
  \caption{Performance of MEGA compared to $\rho^{\text{RAND}}$. The shaded area around the line plots represents result variance over 50 repetitions. The experiment was run with $N=6$ users and $K=9$ channels, and the number of collisions was averaged over all users.}
\end{center}
\end{figure}

The results in \figref{fig:regret_fixed} and \figref{fig:collisions_fixed} present a scenario in which $N<K$. In the more challenging scenario of $N=K$ our algorithm's advantage is even more pronounced, as is evident from \figref{fig:regret_fixedKN} and \figref{fig:collisions_fixedKN}. Here, $\rho^{\text{RAND}}$ actually fails to converge to the optimal configuration, yielding constant average regret.

\begin{figure}\label{fig:KN}
\begin{center}
  \subfloat[Average regret over time]{\includegraphics[width=0.5\textwidth]{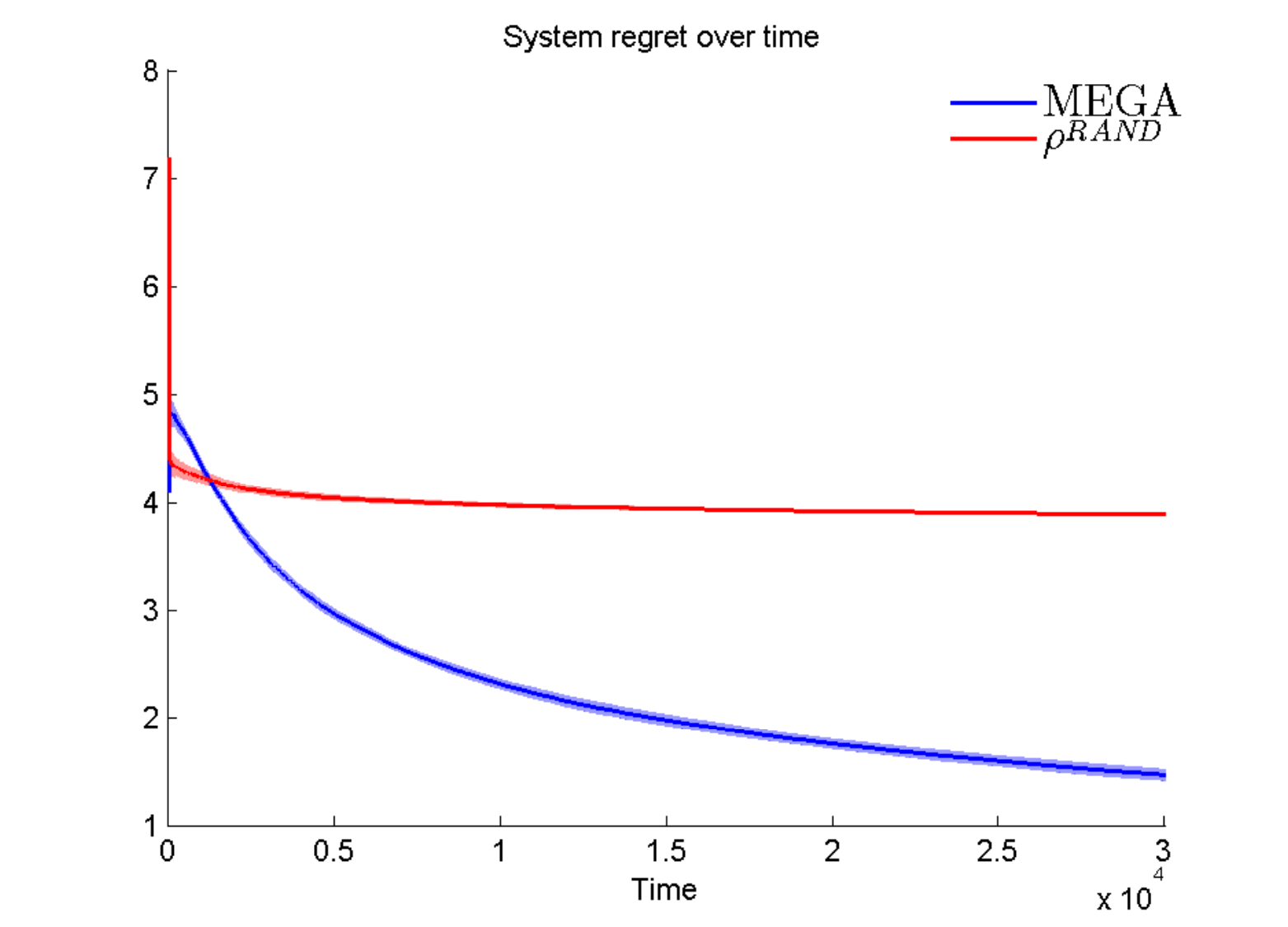}\label{fig:regret_fixedKN}}
  \subfloat[Collisions over time]{\includegraphics[width=0.5\textwidth]{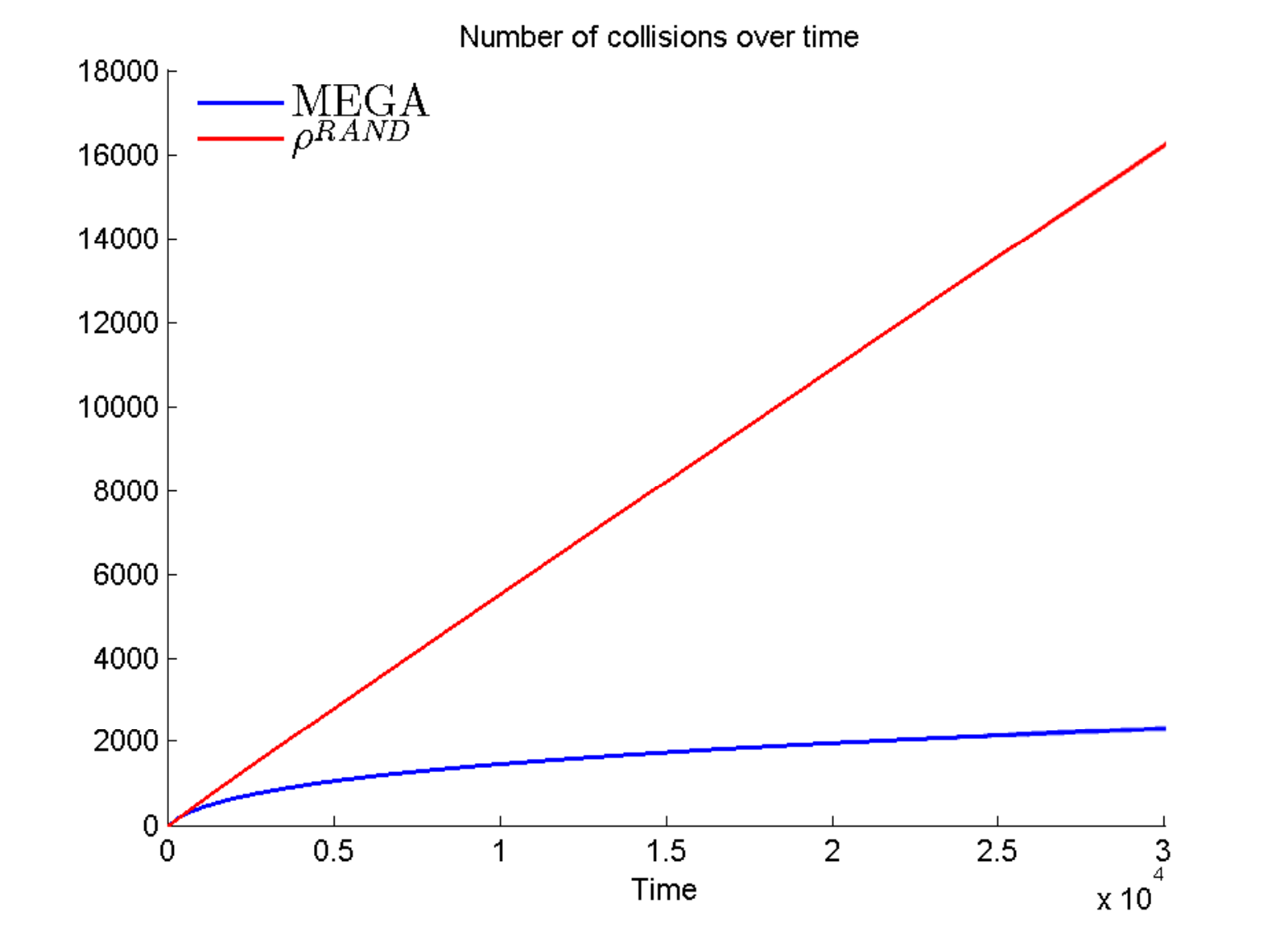}\label{fig:collisions_fixedKN}}
  \caption{Performance of MEGA compared to $\rho^{\text{RAND}}$. The shaded area around the line plots represents result variance over 50 repetitions. It is barely visible due to the small variance. The experiment was run with $N=12$ users and $K=12$ channels, and the number of collisions was averaged over all users.}
\end{center}
\end{figure}

Next, we display the results of experiments in which the number of users changes over time. Initially, the number of users is $1$, gradually increasing until it is equal to $4$, decreasing back to $1$ again. Since $\rho^{\text{RAND}}$ needs a fixed value for the number of users, we gave it the value $N_0=2$, which is the average number of users in the system over time. For different values of $N_0$ the performance of $\rho^{\text{RAND}}$ was rather similar; we present a single value for the sake of clarity.

As before, \figref{fig:regret_arr} displays the average regret over time and \figref{fig:collisions_arr} displays the cumulative number of collisions over time, averaged over all users.

Clearly, MEGA exhibits better performance in terms of regret and collision rate for both scenarios. The significant improvement in the variance (represented by the shaded area around the line plots) of MEGA compared to $\rho^{\text{RAND}}$ is also noteworthy.

\begin{figure}\label{fig:variable}
\begin{center}
  \subfloat[Average regret over time]{\includegraphics[width=0.5\textwidth]{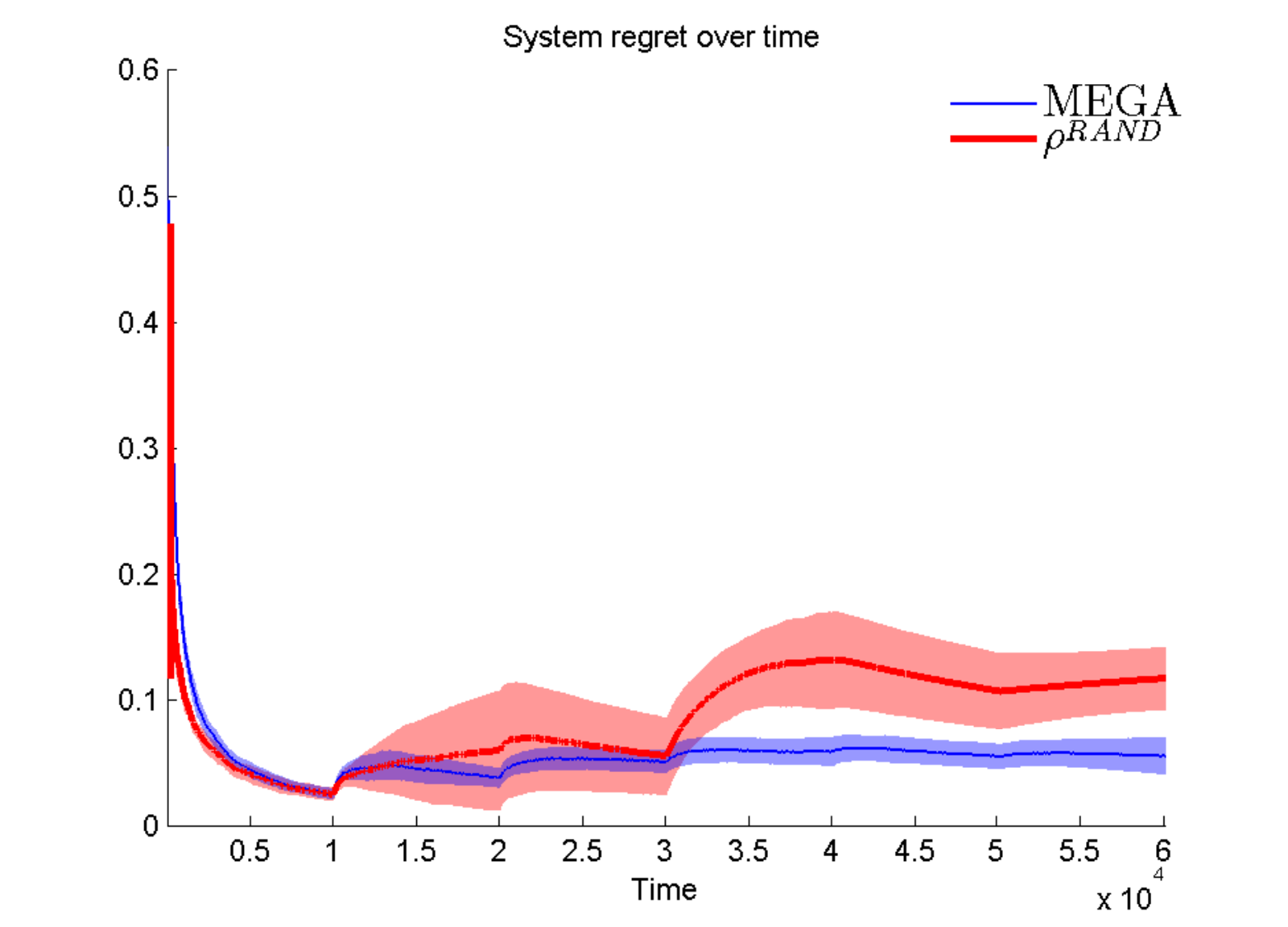}\label{fig:regret_arr}}
  \subfloat[Collisions over time]{\includegraphics[width=0.5\textwidth]{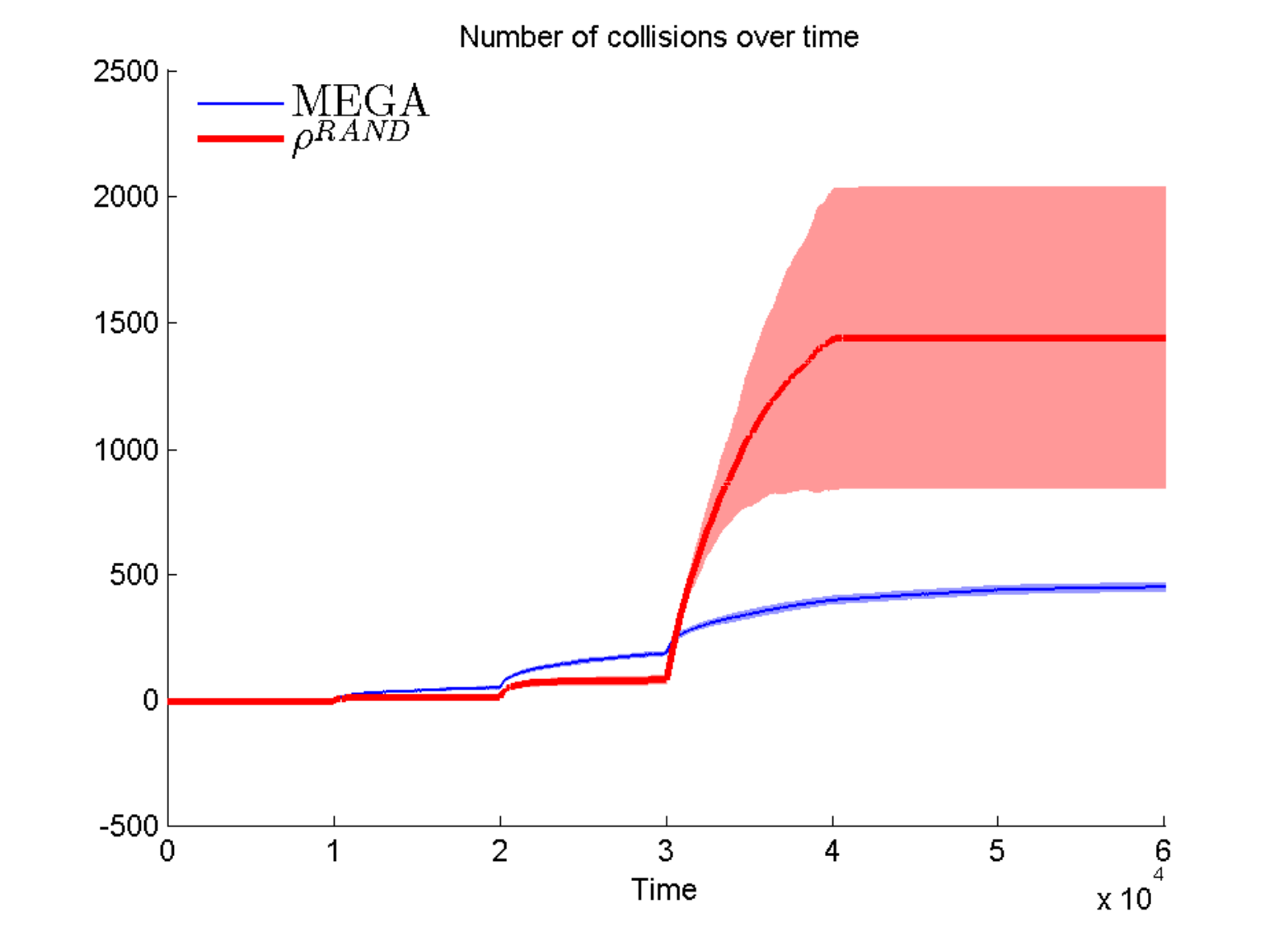}\label{fig:collisions_arr}}
  \caption{Performance of MEGA compared to $\rho^{\text{RAND}}$ in the dynamic scenario. The shaded area around the line plots represents result variance over 20 repetitions. The experiment was run $K=12$ channels, and the number of collisions was averaged over all users.}
\end{center}
\end{figure}

\section{Conclusion}
\label{sec:conclusion}
We formulate the problem of multiple selfish users learning to split the resources of a multi-channel communication system modeled by a stochastic MAB. Our proposed algorithm, a combination of an $\e$-greedy policy with an availability detection mechanism, exhibits good experimental results for both fixed and dynamic numbers of users in the network. We augment these results with a theoretical analysis guaranteeing sub-linear regret. It is worth noting that this algorithm is subject to a very strict set of demands, as mentioned in Sections 1 and 2.

We plan to look into additional scenarios of this problem. For example, an explicit collision indication isn't always available in practice. Also, collisions may result in partial, instead of zero, reward. Another challenge is presented when different users have different views of the arms' characteristics (i.e., receive different rewards). We believe that since our algorithm does not involve communication between users, the different views might actually result in fewer collisions. We would also like to expand our theoretical analysis of the scenario in which the number of users is dynamic, deriving concrete regret bounds for it.

\bibliographystyle{plain}
\bibliography{ecml2014bib}

\newpage

\appendix

\section{Supplementary material: proofs}
\label{sec:appendix}

\subsection{Proof of \lemref{lem:collisions}}
Focusing on a single arm $k$, the following observation holds: a period of consecutive collisions (i.e.,  collision event) is always followed by a ``quiet'' period, in which at least one of the colliding users deems $k$ unavailable. Thus, no collisions will occur during this period. In order to bound the expected number of collisions, $\mE{C\paren{t}}$, we examine the series of collision events and collision-free periods.
First, we bound the expected length of a collision event, both from below and from above. We will user the upper bound to bound the number of collisions, and the lower bound to estimate the length of all series up till time $t$. Given a single collision has occurred, the occurrence of the next collision is geometrically distributed with a success probability of $1-p_1p_2$, where $p_1$ and $p_2$ are the persistence probabilities of the colliding users. Thus, the mean length of a collision event is $1 + \frac{p_1p_2}{1-p_1p_2}$.

The persistence probabilities of the two users, $p_1,p_2$, can be bounded in the following manner: for at least one of the users, the first collision is a ``fresh'' attempt at the arm being sampled. Therefore, at the beginning of any collision series, either $p_1=p_0$ or $p_2=p_0$ (or, possibly, both). Also, $p_1 \leq 1$ and $p_2 \leq 1$ at all times. Thus, it is clear that $p_0^2 \leq p_1p_2 \leq p_0$ at all times for any arm.
As a result, we have that the expected length of a collision series, denoted by $\mE{L_c}$, is bounded from above and from below:
\begin{align}\label{eq:L_bounds}
  \mE{L_C} &= 1 + \frac{p_1p_2}{1-p_1p_2} \leq 1 + \frac{p_0}{1-p_0} = \frac{1}{1-p_0},\\
  \mE{L_C} &= 1 + \frac{p_1p_2}{1-p_1p_2} \geq 1 + \frac{p_0^2}{1-p_0^2} = \frac{1}{1-p_0^2}.
\end{align}
From now on, we denote $L_{\text{up}} \triangleq \frac{1}{1-p_0}, L_{\text{low}} \triangleq \frac{1}{1-p_0^2}$.

According to \algref{alg:alg1}, once an agent gives up on an arm, she marks it unavailable for a period of mean length $\half t^\beta$. Thus, the collision-no collision series is of the following form:
\begin{align*}
  L_1, Q_1, L_2, Q_2,\ldots,L_m,Q_m,\ldots,
\end{align*}
where $L_m$ are the collision episodes and $Q_m$ are the ``quiet'' intervals.

We would like to bound the expected number of collisions up till time $t$. In order to do so, we need to calculate $m$ - the expected number of collision \emph{episodes} up till time $t$. This requires calculating the expected length of $L_m$ and $Q_m$.

Starting with $\mE{L_m}$, we will take its lower bound, $L_{\text{low}}$, introduced in \eqref{eq:L_bounds}. Using the lower bound will enable us to devise an upper bound on $m$. Bounding the expectation of $Q_m$ is a bit more complicated:
\begin{align}\label{eq:Q_m}
  \mE{Q_m} = \half\mE{\paren{\sum_{i=1}^{m}L_i + \sum_{i=1}^{m-1}Q_i}^\beta}.
\end{align}
Since $f\paren{x} = x^\beta$ is concave for $\beta<1$, we need to use the Edmunson-Madansky inequality in order to take the expectation of the random variables inside the power operation. According to \cite{Madansky1959}, the expectation of a concave function $g\paren{X}$ of a random variable $X$ defined on a bounded interval $\sbrk{x_1,x_2}$ can be bounded using the first moment of the random variable $X$ in the following manner:
\begin{align*}
  \mE{g\paren{X}} \geq \frac{g\paren{x_2} - g\paren{x_1}}{x_2 - x_1}\paren{\mE{X} - x_1} + g\paren{x_1}.
\end{align*}
In our case, since $\mE{Q_m} = \half t^\beta$, the inequality takes on the following form:
\begin{align*}
  \mE{Q_m} \geq \half t^{\beta-1}\paren{\sum_{i=1}^{m}\mE{L_i} + \sum_{i=1}^{m-1}\mE{Q_i}}.
\end{align*}
Plugging in the lower bound on $L_i$ we have
\begin{align*}
  \mE{Q_m} \geq \half t^{\beta-1}\paren{mL_{\text{low}} + \sum_{i=1}^{m-1}\mE{Q_i}}.
\end{align*}
In order to continue, we use the inequality to extract a bound on $\mE{Q_i}$ for some $i\in\paren{1,\ldots,m-1}$:
\begin{align}\label{eq:Q_i}
  \mE{Q_i} \geq \half t^{\beta-1}i L_{\text{low}}.
\end{align}
In general, replacing $t$, which corresponds to $m$, by a smaller value $t'$, which corresponds to $i$, will yield a tighter bound. However, the bound in \eqref{eq:Q_i} is sufficient for our proof.

We continue developing the bound on $\mE{Q_m}$:
\begin{align*}
  \mE{Q_m}
  &\geq \half t^{\beta-1}\paren{mL_{\text{low}}
    + \half t^{\beta-1}L_{\text{low}}\sum_{i=1}^{m-1}i} \\
  & = \half t^{\beta-1}\paren{mL_{\text{low}}
    + \half t^{\beta-1}L_{\text{low}}\frac{m^2-m}{2}} \\
  &\geq \frac{1}{8} t^{2\beta-2}L_{\text{low}}m^2.
\end{align*}

We now return to $t$, the time interval containing $m$ collision events.
\begin{align*}
  t &= \sum_{i=1}^{m-1}L_i + \sum_{i=1}^{m-1}Q_i\\
    &\geq m L_{\text{low}} + \half t^{\beta-1}L_{\text{low}}\sum_{i=1}^{m-1}i \\
    & = m L_{\text{low}} + \frac{1}{4}t^{\beta-1}L_{\text{low}}\paren{m^2-m} \\
    & \geq \frac{1}{4}t^{\beta-1}L_{\text{low}}m^2,
\end{align*}
which means that
\begin{align*}
  m \leq \frac{2}{\sqrt{L_{\text{low}}}}t^{1-\beta/2}.
\end{align*}

This bound on $m$, the number of collision episodes, enables us to bound the expected number of pairwise, per-channel collisions up till time $t$:
\begin{align}\label{eq:CtBound}
  \mE{C_p\paren{t}} \leq \frac{2L_{\text{up}}}{\sqrt{L_{\text{low}}}}t^{1-\beta/2}.
\end{align}
This number of collisions is sub-linear for any $\beta>0$.
\qed

\subsection{Proof of \lemref{lem:avail}}
The worst case in terms of regret due to lack of availability occurs when all agents ``agree'' on the identity of the best arms, i.e., have converged to the same ranking of arms. Such a situation will lead to a maximal number of collisions as $\e_t$ becomes smaller. We therefore calculate the regret for the scenario of two users targeting some arm $k$, which they deem to be the best arm. The general availability-regret bound is calculated based on this scenario.
Availability regret is accumulated when both agents declare an arm as unavailable simultaneously at the end of a collision streak. A simultaneous cession leads to the series of events depicted in \figref{fig:sim_cession}.

During the time interval $\left[t,t_1\right)$ both agents consider arm $k$ to be taken. At time $t_1$, agent 1 (w.l.o.g) ``unfreezes'' the arm and may begin sampling it. Until time $t_2$, when agent 2 ``unfreezes'' the arm as well, agent 1 may sample it without being disturbed by agent 2. We denote the length of the interval between these events by $\tau \triangleq t_2 - t_1$.

Once both agents consider the arm available, it is only a matter of time until they collide again. The time of their first collision is $t_c$, and the time interval preceding it is $\Delta \triangleq t_c-t_2$. Once a collision occurs both agents continue sampling the arm for a period of length $\ell \triangleq t_g-t_c$, where $t_g$ is the time that one or both agents declare the arm as unavailable once again.
\begin{figure}[ht]
\begin{center}
\centerline{\includegraphics[width=0.5\columnwidth]{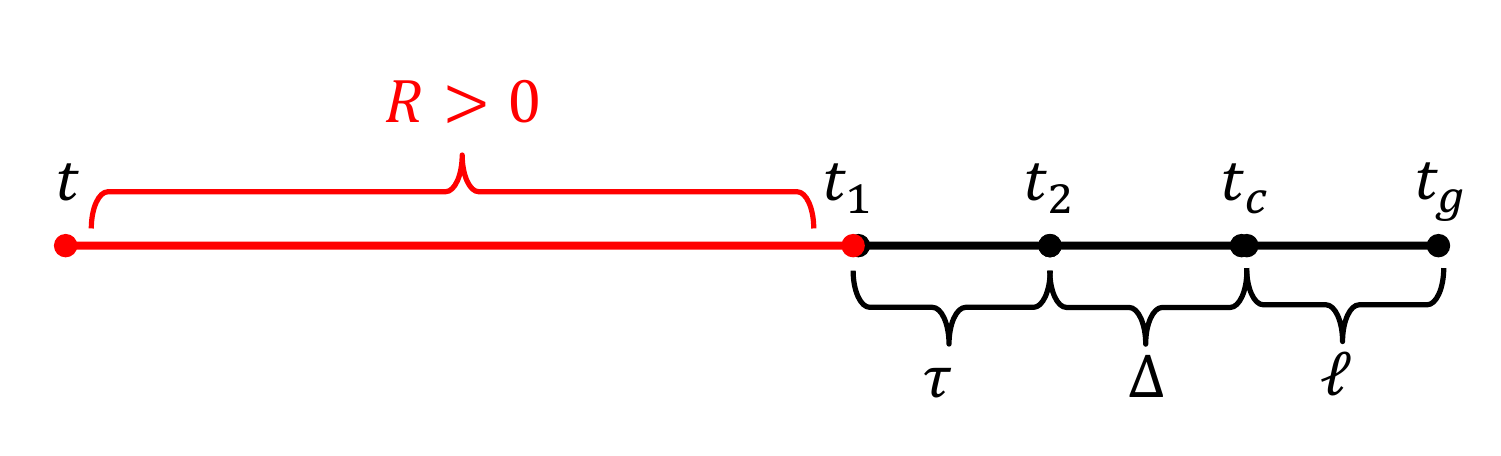}}
\caption{Events following simultaneous cession.}
\label{fig:sim_cession}
\end{center}
\end{figure}

We begin by deriving a lower bound on the probability of an event we denote by $A$: a unilateral cession, given that the previous cession was simultaneous.

Given that a simultaneous cession occurred at some time $t$, both agents' recovery times, $t_1$ and $t_2$, are uniformly distributed in the same manner:
\begin{align*}
  t_1 &\sim U\paren{\sbrk{1, t^\beta}} + t \\
  t_2 &\sim U\paren{\sbrk{1, t^\beta}} + t.
\end{align*}
To keep our notation simple, we assume that $t^\beta$ is an integer, or it is replaced by $\floor{t^\beta}$.

The probability density function of $\tau \triangleq t_2 - t_1$ can be derived as follows:
\begin{align*}
  \forall \tau'\in\set{0,\ldots,t^\beta-1},\;\;
   f_\tau \paren{\tau'} = \sum_{\ell=-\infty}^{\infty} f_{t_1}\paren{\ell}f_{t_2}\paren{\ell+\tau'} = \sum_{\ell=1}^{t^\beta}\frac{1}{t^\beta}f_{t_2}\paren{\ell+\tau'}
  = \sum_{\ell=1}^{t^\beta-\tau'}\frac{1}{t^{2\beta}} = \frac{t^\beta - \tau'}{t^{2\beta}},
\end{align*}
where we used the fact that $ f_{t_i}\paren{\ell} = \sfrac{1}{t^\beta}, \; \forall \ell\in\sbrk{1,t^\beta}$ and derived the probability for the case that $t_2 \geq t_1$.

The probability of the event $A$ depends on all of the variables $t_1, t_2, t_c, t_g$ and on $t$. Alternatively, it depends on the lengths of the intervals between events, $\tau, \delta, \ell$ and on $t$. Formally:
\begin{align}
\label{eq:full_A}
  \mP{A} = \sum_{\tau' = -t^\beta+1}^{t^\beta-1} \sum_{\delta' = 0}^{\infty} \sum_{\ell' = 1}^{\infty} \cP{A}{\tau = \tau', \Delta = \delta', \ell = \ell'}\mP{\tau = \tau', \Delta = \delta', \ell = \ell'}.
\end{align}
We begin by examining the dependency on $\tau$. From symmetry we have that
\begin{align*}
  \mP{A} &= \sum_{\tau' = -t^\beta+1}^{t^\beta-1} \cP{A}{\tau = \tau'}\mP{\tau = \tau'}\\
  &=  2 \sum_{\tau' = 1}^{t^\beta-1} \cP{A}{\tau = \tau'}\mP{\tau = \tau'} + \cP{A}{\tau = 0}\mP{\tau = 0} \\
  &\geq 2 \sum_{\tau' = 1}^{t^\beta-1} \cP{A}{\tau = \tau'}\mP{\tau = \tau'}.
\end{align*}
In addition, for any givne $t$, the distribution of $\tau$ is independent of that of $\Delta$ and $\ell$.

Next, we observe that the dependency of the event $A$ on $\tau$ and $\Delta$ is through the number of successful sample attempts agent 1 makes before agent 2 interferes. This number of samples can be bounded using a binomial random variable with $\tau + \Delta$ trials and a success probability $p = 1-\e_t$. The last observation can be justified as follows: for every $t'>T$, the colliding agents rank some arm $k$ as optimal. If it is considered available, they sample it with probability $1 - \e_t' + \sfrac{\e_t'}{K_A}$, where $K_A$ is the number of arms available to each agent. This probability monotonously increases over time and is lower bounded by $p = 1-\e_t$ for any $t'\in\sbrk{t, t_g}$.

Examining the distribution of $\Delta$, we see that it is lower bounded by a geometrically distributed random variable with parameter $q = \paren{1-\e_t}^2$, following an argument similar to the one above. In our analysis we will use the following bound:
\begin{align}
\label{eq:A_delta}
  \mP{A} = \sum_{\delta' = 0}^{\infty} \cP{A}{\Delta = \delta'}\mP{\Delta = \delta'} \geq \cP{A}{\Delta = 0}\paren{1-\e_t}^2.
\end{align}

Finally, the event $A$ depends on the length of the collision series, $\ell$. Since we assume that $\Delta = 0$ in our bound (see \eqref{eq:A_delta}), the persistence probability of agent 2 is simply $p_0$, as it does not accumulate any successful sample attempts. The persistence probability of agent 1, on the other hand, depends on the number of sample attempts it made over the course of $\tau$ rounds. When two agents with persistence probabilities $p_1$ and $p_2$ collide, the probability of agent 1 ``conquering'' the collision streak is
\begin{align*}
  \mP{A'} = \sum_{\ell' = 1}^{\infty} p_1^{\ell'} p_2^{\ell'-1}\paren{1-p_2} = p_1\frac{1-p_2}{1-p_1p_2}.
\end{align*}
The expression for $p_1$ as a function of the number of successful samples agent 1 made, $m_1$, is $p_1 = 1 - \alpha^{m_1}\paren{1-p_0}$. Plugging in the fact that $p_2 = p_0$ and this expression yields
\begin{align*}
  \mP{A'\paren{m_1}} = 1 - \frac{\alpha^{m_1}}{1 + \alpha^{m_1} p_0}.
\end{align*}

Combining the observations above, we re-write \eqref{eq:full_A}:
\begin{align*}
  \mP{A} \geq 2\sum_{\tau' = 1}^{t^\beta-1}\sum_{m = 0}^{\tau} \mP{A'\paren{m_1\paren{\tau = \tau',\Delta = 0}}} \mP{\tau = \tau'} \mP{\Delta = 0} \mP{m_1 = m}.
\end{align*}
We note that $\mP{A}\geq \mP{A'}$, since $A'$ does not include the probability of agent 2 conquering the collision streak.

Further developing our bound, we have that
\begin{align}
\label{eq:A_dev}
\begin{split}
  \mP{A} &\geq 2\sum_{\tau' = 1}^{t^\beta-1}\sum_{m = 0}^{\tau'} \mP{A'\paren{m_1\paren{\tau = \tau',\Delta = 0}}} \mP{\tau = \tau'} \mP{\Delta = 0} \mP{m_1 = m}\\
    &\geq 2\paren{1-\e_t}^2\sum_{\tau' = 1}^{t^\beta-1}\frac{t^\beta - \tau'}{t^{2\beta}}\sum_{m = 0}^{\tau'}
    \mP{m_1 = m} \paren{1 - \frac{\alpha^{m}}{1 + \alpha^{m} p_0}} \\
    &\geq 2\paren{1-\e_t}^2\sum_{\tau' = 1}^{t^\beta-1}\frac{t^\beta - \tau'}{t^{2\beta}}\sum_{m = 0}^{\tau'}
    \mP{m_1 = m} \paren{1 - \alpha^{m}} \\
    &=  2\paren{1-\e_t}^2\sbrk{\sum_{\tau' = 1}^{t^\beta-1}\frac{t^\beta - \tau'}{t^{2\beta}}\sum_{m = 0}^{\tau'} \mP{m_1 = m} -
    \sum_{\tau' = 1}^{t^\beta-1}\frac{t^\beta - \tau'}{t^{2\beta}}\sum_{m = 0}^{\tau'} \mP{m_1 = m} \alpha^{m}} \\
    &= 2\paren{1-\e_t}^2\sbrk{\underbrace{\sum_{\tau' = 1}^{t^\beta-1}\frac{t^\beta - \tau'}{t^{2\beta}}}_{A_1} -
    \underbrace{\sum_{\tau' = 1}^{t^\beta-1}\frac{t^\beta - \tau'}{t^{2\beta}}\sum_{m = 0}^{\tau'} \mP{m_1 = m} \alpha^{m}}_{A_2}},
\end{split}
\end{align}
where the last equality stems from the fact that the support of the random variable $m_1$ is $\set{0,\ldots,\tau}$.

We now address the two terms in \eqref{eq:A_dev}, denoted by $A_1$ and $A_2$, separately.

We would like to link the inner sum of $A_2$ to the moment generating function (MGF) of a binomial random variable.  By definition, the moment generating function of a random variable $X$ is $M_X\paren{t} = \mE{e^{tX}}$. Using the fact that $\alpha^m = e^{m\ln\alpha}$, we have that the inner sum of $A_2$ corresponds to $M_X\paren{\ln\alpha}$. In order to upper bound $A_2$, we need to examine the dependency of the moment generating function on the success probability. In our case, the success probability is time-dependent through the exploration factor $\e_t$. Thus, $m_1$ is not a binomial random variable. However, the success probability can be bounded using the extreme values of $t_1$ and $t_2$. The time interval during which the ``pseudo-binomial'' experiment takes place is $\sbrk{t_1,t_2}$. The minimal value of $t_1$ is $t+1$, while the maximal value of $t_2$ is $t+t^\beta$. Thus, $\e_{t+1} \geq \e_{t'} \geq \e_{t+t^\beta}$ for any $t'\in\sbrk{t_1,t_2}$ ($\e_t$ is monotonously decreasing in $t$). The success probability of each trial in our algorithm is $p_{t'} = 1-\e_{t'} + \sfrac{\e_{t'}}{K_A} \geq 1-\e_{t+1}$.
The MGF of a binomial random variable $X\sim\text{B}\paren{n,p}$ is $M_X\paren{t} = \paren{1-p+p e^t}^n$, which is transformed into $f\paren{p} = \paren{1-p+\alpha p}^n$ in our case. We examine $f\paren{p\paren{t'}}$ for $t'\in\sbrk{t_1,t_2}$:
\begin{align*}
  f'\paren{p} = \frac{df}{dp}\frac{dp}{dt'} = n\paren{\alpha - 1}\sbrk{1-p\paren{1-\alpha}}^{n-1}\frac{c}{t'^2},
\end{align*}
which is negative for all $t'$ ($c$ is a constant determined by our algorithm). We conclude that maximum of $f$ is obtained when $t'$ is minimal, resulting in the following bound:
\begin{align}\label{eq:Bin_sum}
  \sum_{m = 0}^{\tau'} \mP{m_1 = m} \alpha^{m} \leq  \paren{1 - \paren{1-\e_{t+1}}\paren{1-\alpha}}^{\tau'}.
\end{align}
Denoting $q = 1 - \paren{1-\e_{t+1}}\paren{1-\alpha}$, we have that
\begin{align*}
  A_2 &\leq \sum_{\tau' = 1}^{t^\beta-1}\frac{t^\beta - \tau'}{t^{2\beta}} q^{\tau'} \leq \frac{1}{t^{\beta}}\sum_{\tau' = 1}^{t^\beta-1} q^{\tau'}
      \leq \frac{1}{t^{\beta}}\cdot \frac{q - q^{t^{\beta}-1}}{1 - q} \leq \frac{1}{t^{\beta}}\cdot \frac{q}{1 - q} \\
      &= \frac{1}{t^{\beta}}\cdot \frac{1 - \paren{1 - q}}{1 - q} = \frac{1}{t^{\beta}}\cdot \frac{1}{1 - q} - \frac{1}{t^{\beta}}
      \leq \frac{1}{t^{\beta}}\cdot \frac{1}{\paren{1-\e_{t+1}}\paren{1-\alpha}}.
\end{align*}
Since $\e_t$ decreases over time, we can define $T_2 = \min_t\set{t:\e_{t+1} \leq 0.5}$, resulting in a simple bound for $A_2$:
\begin{align*}
  A_2 \leq \frac{2}{1-\alpha}\cdot \frac{1}{t^{\beta}},
\end{align*}
for all $t > T_2$.

Next, we examine $A_1$:
\begin{align*}
  A_1 = \sum_{\tau' = 1}^{t^\beta-1}\frac{t^\beta - \tau'}{t^{2\beta}} = 1 - \frac{1}{t^\beta} - \frac{1}{t^{2\beta}}\sum_{\tau' = 1}^{t^\beta-1}\tau'
  = 1 - \frac{1}{t^\beta} - \frac{t^\beta-1}{2t^\beta} = \half - \frac{1}{2t^\beta}.
\end{align*}

Combining the last results, we have
\begin{align}\label{eq:A}
  \mP{A} \geq \paren{1-\e_t}^2\paren{1 - \frac{5 - \alpha}{1-\alpha}\cdot\frac{1}{t^\beta}}, \;\forall t > T_2.
\end{align}

We continue by deriving an upper bound on the probability of the event $B$: a simultaneous cession, given that the previous cession was unilateral.
\begin{figure}[ht!]
\vskip 0.2in
\begin{center}
\centerline{\includegraphics[width=0.5\columnwidth]{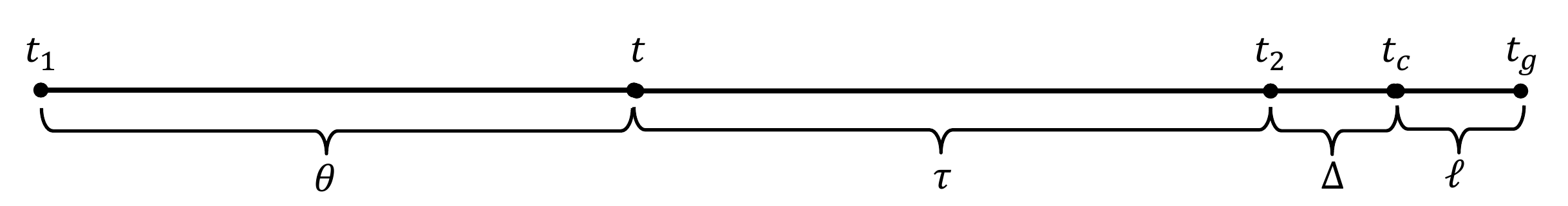}}
\caption{Events preceding and following unilateral cession.}
\label{fig:unilateral_cession}
\end{center}
\vskip -0.2in
\end{figure}

The chain of events matching $B$ is displayed in \figref{fig:unilateral_cession}; our notation is similar to that used in the analysis of the event $A$. $t_1$ is the last point in time in which agent 1 ``unfreezes'' the arm under dispute. She later goes on to sample it, collide with agent 2 and conquer the collision streak, which ends at time $t$ with a unilateral cession of agent 2. The time interval $\theta \triangleq \sbrk{t_1,t}$ includes a period of unknown length during which agent 1 increases her persistence by successfully sampling arm $k$. This persistence is \emph{not} reset at time $t$, since agent 1 does not cede its right to sample the arm - the collision streak ends due to agent 2's unilateral cession. At time $t$ agent 2 steps down, allowing agent 1 to continue sampling arm $k$ without being disturbed. Agent 2 considers arm $k$ to be unavailable all through the interval $\tau \triangleq \left(t,t_2\right]$. At time $t_2$ agent 2 ``unfreezes'' the arm, and at time $t_c$ it begins a new collision streak with agent 1. The time interval until the collisions begin is denoted by $\Delta$, the length of the collision streak is $\ell$ and its end is mark by time $t_g$.

The characteristics of the collision streak which begins at $t_c$, from agent 1's point of view, are determined by both the history of sampling in the interval $\theta$ and the history of sampling in the interval $\tau$. However, we argue that for large values of $t$ the interval $\tau$ is long enough for us to derive a tight bound on $\mP{B}$ without analyzing the events in $\theta$.

Before we examine $\mP{B}$ directly, we note that it depends only on the length of the intervals $\theta$ and $\tau$, through the number of times agent 1 samples arm $k$ during these intervals. This is a random variable which resembles a Binomial random variable, albeit with a temporally dependent success probability. We denote this variable by $m_1\paren{\theta}$ and $m_1\paren{\tau}$ for each of the intervals. As before, we assume $t^\beta$ and $t^{\beta/2}$ to be integers, or replaced by rounded values.

We begin by examining the range of values of $\tau$, and its effect on $\mP{B}$.
\begin{align*}
  \mP{B} = \sum_{\tau' = 1}^{t^\beta}\cP{B}{\tau = \tau'}\mP{\tau = \tau'}
    =  \underbrace{\sum_{\tau' = 1}^{t^{\beta/2}-1}\cP{B}{\tau = \tau'}\mP{\tau = \tau'}}_{B_1}
    + \underbrace{\sum_{\tau' = t^{\beta/2}}^{t^{\beta}}\cP{B}{\tau = \tau'}\mP{\tau = \tau'}}_{B_2}
\end{align*}

Bounding $B_1$ is rather simple:
\begin{align*}
  B_1 = \sum_{\tau' = 1}^{t^{\beta/2}-1}\cP{B}{\tau = \tau'}\mP{\tau = \tau'}
  \leq \sum_{\tau' = 1}^{t^{\beta/2}-1}\mP{\tau = \tau'}
  = \frac{t^{\beta/2}-1}{t^\beta}
  \leq \frac{1}{t^{\beta/2}},
\end{align*}
where we used the fact that $\tau$ is uniformly distributed in $\sbrk{1,t^\beta}$.

Turning to $B_2$, we note that $\cP{B}{\tau = \tau'}$ decays as $\tau$ grows. As a result, we have that
\begin{align*}
  B_2 = \sum_{\tau' = t^{\beta/2}}^{t^{\beta}}\cP{B}{\tau = \tau'}\mP{\tau = \tau'}
  \leq \frac{1}{t^\beta}\sum_{\tau' = t^{\beta/2}}^{t^{\beta}}\cP{B}{\tau = t^{\beta/2}}
  = \frac{t^{\beta} - t^{\beta/2}}{t^\beta}\cP{B}{\tau = t^{\beta/2}}
  \leq \cP{B}{\tau = t^{\beta/2}}.
\end{align*}

We continue addressing the different sources of randomness affecting $\mP{B}$ (more specifically, $B_2$). The length of the interval $\Delta = t_c - t_2$ depends on the sampling policies of both agents. Each agent has a probability of sampling arm $k$ that depends on time and on the number of arms that agent considers available:
\begin{align*}
  P_{s,i} = 1-\e_t' + \frac{\e_t'}{K_{A,i}}, \;\forall t'\geq T.
\end{align*}
The probability distribution of $\Delta$ resembles a geometric distribution with a changing success probability, $1-P_{s,1}P_{s,2}$. Let us analyze the geometric distribution for a fixed sampling probability $p$:
\begin{align*}
  \mP{\Delta = \delta'} = \paren{1-p}^{\delta'}p.
\end{align*}
Differentiating w.r.t $p$, we obtain
\begin{align*}
  \frac{dP}{dp} = -\delta'\paren{1-p}^{\delta'-1}p + \paren{1-p}^{\delta'} = \paren{1-p}^{\delta'-1}\sbrk{-\delta'p  + 1 - p} = \paren{1-p}^{\delta'-1}\sbrk{1 - p\paren{1+\delta'}}.
\end{align*}
This derivative is negative whenever $1 - p\paren{1+\delta'}$ is negative. Defining $T_3 = \min_t\set{t:1-\e_t > 0.5}$, this condition holds for every $\delta'>0,\; t>T_3$. We therefore conclude that $\mP{\Delta = \delta'}$ decreases as $P_{s,i}$ grows, and we can bound it using a bound on $P_{s,i}$:
\begin{align*}
  P_{s,i} \geq 1-\e_t' \geq 1-\e_{t+1},  \;\forall t'> t,
\end{align*}
so that:
\begin{align}
\label{eq:delta_prob}
  \mP{\Delta = \delta'} \leq \paren{1 - \paren{1-\e_{t+1}}^2}^{\delta'}\paren{1-\e_{t+1}}^2, \;\delta'\in\set{0,\infty},\;t>T_3.
\end{align}
This bound allows us to ignore the dependency of $\Delta$ on $\tau$ (which exists since $P_{s,i}$ is time dependent), so that
\begin{align}
\label{eq:B_2}
  B_2 \leq \cP{B}{\tau = t^{\beta/2}} = \sum_{\delta' = 0}^{\infty} \cP{B}{\tau = t^{\beta/2}, \Delta = \delta'}\mP{\Delta = \delta'}.
\end{align}

We now turn to calculating $\cP{B}{\tau = \tau', \Delta = \delta'}$, introducing the following definitions: $m_1\paren{\tau'}$ is a Binomial random variable that describes the number of successful attempts agent 1 made when sampling arm $k$ during the interval $\sbrk{t,t_2}$. Similarly, $m_1\paren{\delta'}$ is the Binomially distributed number of successes of agent 1 in the interval $\left(t_2,t_c\right]$, and $m_2\paren{\delta'}$ is the Binomially distributed number of successes of agent 2 in the interval $\left(t_2,t_c\right]$. We note that by definition $m_2\paren{\tau'} = 0$. Given that the interval $\left(t_2,t_c\right]$ is of length $\Delta = \delta'$, we know that $m_1\paren{\delta'} + m_2\paren{\delta'} \leq \delta'$. This is used in setting the summation boundaries in the equation below.
\begin{align}
\begin{split}
\label{eq:full_B}
  &\cP{B}{\tau = \tau', \Delta = \delta'} \\
  &= \sum_{\mu_1 = 0}^{\delta'} \cP{B}{\tau = \tau', \Delta = \delta', m_1\paren{\delta'} = \mu_1}\mP{m_1\paren{\delta'} = \mu_1} \\
  &= \sum_{\mu_1 = 0}^{\delta'} \sum_{\mu_2 = 0}^{\delta' - \mu_1} \cP{B}{\tau = \tau', \Delta = \delta', m_1\paren{\delta'} = \mu_1, m_2\paren{\delta'} = \mu_2}
    \cP{m_1\paren{\delta'} = \mu_1}{m_2\paren{\delta'} = \mu_2}\mP{m_2\paren{\delta'} = \mu_2} \\
  &= \sum_{\mu_1 = 0}^{\delta'} \mP{m_1\paren{\delta'} = \mu_1} \sum_{\mu_2 = 0}^{\delta' - \mu_1} \mP{m_2\paren{\delta'} = \mu_2} \\
    &\quad\quad\sum_{\eta_1 = 0}^{\tau'} \mP{m_1\paren{\tau'} = \eta_1} \cP{B}{\tau = \tau', \Delta = \delta', m_1\paren{\delta'} = \mu_1, m_2\paren{\delta'} = \mu_2, m_1\paren{\tau'} = \eta_1}.
\end{split}
\end{align}

The expression $\cP{B}{\tau = t^{\beta/2}, \Delta = \delta', m_1\paren{\delta'} = \mu_1, m_2\paren{\delta'} = \mu_2, m_1\paren{\tau'} = \eta_1}$ can be calculated based on the definition of our algorithm.  Given $n_i$ successful samples of an arm, an agent's persistence in a collision event is $p_i = 1 - \alpha^{n_i}\paren{1-p_0}$. Given $p_1$ and $p_2$, the probability of a simultaneous cession is
\begin{align*}
  P_{\text{sim}} = \sum_{\ell = 1}^\infty p_1^{\ell-1}\paren{1-p_1}p_2^{\ell-1}\paren{1-p_2} = \paren{1-p_1}\paren{1-p_2}\sum_{\ell = 0}^\infty \paren{p_1p_2}^\ell = \frac{\paren{1-p_1}\paren{1-p_2}}{1-p_1p_2}.
\end{align*}
In our analysis we have that $n_1 = m_1\paren{\tau'}+m_1\paren{\delta'}$ and $n_2 = m_2\paren{\delta'}$. Plugging in the values $m_1\paren{\delta'} = \mu_1, m_2\paren{\delta'} = \mu_2, m_1\paren{\tau'} = \eta_1$ and manipulating yields:
\begin{align*}
  P_{\text{sim}}
  = \frac{\alpha^{\eta_1 + \mu_1}\alpha^{\mu_2}\paren{1-p_0}^2}{1 - \paren{1 - \alpha^{\eta_1 + \mu_1}\paren{1-p_0}}\paren{1 - \alpha^{\mu_2}\paren{1-p_0}}}
  = \frac{\alpha^{\eta_1 + \mu_1+\mu_2}\paren{1-p_0}}{\alpha^{\eta_1 + \mu_1} + \alpha^{\mu_2} - \alpha^{\eta_1 + \mu_1+\mu_2}\paren{1-p_0}}.
\end{align*}

We turn to simplifying $P_{\text{sim}}$ by bounding it from above:
\begin{align*}
  P_{\text{sim}}
  &= \frac{\alpha^{\eta_1 + \mu_1+\mu_2}\paren{1-p_0}}{\alpha^{\eta_1 + \mu_1} + \alpha^{\mu_2} - \alpha^{\eta_1 + \mu_1+\mu_2}\paren{1-p_0}} \\
  &= \frac{\alpha^{\eta_1 + \mu_1+\mu_2}\paren{1-p_0}}{\alpha^{\eta_1 + \mu_1} +
  \alpha^{\mu_2} - \alpha^{\eta_1 + \mu_1+\mu_2} +  \alpha^{\eta_1 + \mu_1+\mu_2}p_0} \\
  &\leq \frac{\alpha^{\eta_1 + \mu_1+\mu_2}\paren{1-p_0}}{\alpha^{\mu_2} - \alpha^{\eta_1 + \mu_1+\mu_2}} \\
  &= \paren{1-p_0}\frac{\alpha^{\eta_1 + \mu_1}}{1 - \alpha^{\eta_1 + \mu_1}}.
\end{align*}

Next, we show that with high probability, $m_1\paren{\tau'} > 0$. The random variable $m_1\paren{\tau'}$ behaves similarly to a binomial random variable, with the exception of a time-dependent success probability. For times $t'$ such that $t'>t$ and $t'>T$, we have that the success probability of a single trial (e.g. the probability of sampling arm $k$ at a single time step) is bounded: $p_{t'} \geq 1-\e_{t+1} \triangleq p_s$. Therefore,
\begin{align*}
  \mP{m_1\paren{\tau'} > 0} = 1 - \mP{m_1\paren{\tau'} = 0}
  \geq 1 - \paren{1 - p_s}^{\tau'}
  = 1 - \paren{1-\e_{t+1}}^{\tau'} \;\forall t>T.
\end{align*}


Based on this observation, we can further develop $P_{\text{sim}}$:
\begin{align*}
  P_{\text{sim}} \leq \frac{1-p_0}{1-\alpha}\alpha^{\eta_1 + \mu_1}.
\end{align*}

Plugging into \eqref{eq:full_B}, we have
\begin{align*}
  \cP{B}{\tau = \tau', \Delta = \delta'}
  &\leq \sum_{\mu_1 = 0}^{\delta'} \mP{m_1\paren{\delta'} = \mu_1}
  \sum_{\mu_2 = 0}^{\delta' - \mu_1} \mP{m_2\paren{\delta'} = \mu_2}
  \sum_{\eta_1 = 0}^{\tau'} \mP{m_1\paren{\tau'} = \eta_1} \frac{1-p_0}{1-\alpha}\alpha^{\eta_1 + \mu_1} \\
  &= \frac{1-p_0}{1-\alpha}\sum_{\mu_1 = 0}^{\delta'} \alpha^{\mu_1}\mP{m_1\paren{\delta'} = \mu_1}
  \sum_{\mu_2 = 0}^{\delta' - \mu_1} \mP{m_2\paren{\delta'} = \mu_2}
  \sum_{\eta_1 = 0}^{\tau'}\alpha^{\eta_1} \mP{m_1\paren{\tau'} = \eta_1} \\
  &= \frac{1-p_0}{1-\alpha}\sum_{\mu_1 = 0}^{\delta'} \alpha^{\mu_1}\mP{m_1\paren{\delta'} = \mu_1}
  \sum_{\eta_1 = 0}^{\tau'}\alpha^{\eta_1} \mP{m_1\paren{\tau'} = \eta_1}.
\end{align*}
We now apply the same analysis as the one used to bound the inner some of $A_2$, the result of which appears in \eqref{eq:Bin_sum}, in order to address the sums over functions of probabilities of the ``pseudo-binomial'' variables $m_1\paren{\tau'}$ and $m_1\paren{\delta'}$. Based on the result in \eqref{eq:Bin_sum}, we further develop $\cP{B}{\tau = \tau', \Delta = \delta'}$:
\begin{align*}
  \cP{B}{\tau = \tau', \Delta = \delta'}
  &\leq \frac{1-p_0}{1-\alpha}\paren{1-\paren{1-\alpha}p_s}^{\delta'}
  \paren{1- \paren{1-\alpha}p_s}^{\tau'} \\
  &\leq \frac{1-p_0}{1-\alpha}
  \paren{1-\paren{1-\alpha}\paren{1-\e_{t+1}}}^{\delta'+\tau'}.
\end{align*}

Combining this result with \eqref{eq:delta_prob} and \eqref{eq:B_2} yields
\begin{align*}
  B_2 \leq \sum_{\delta'=0}^\infty \frac{1-p_0}{1-\alpha}
  \sbrk{1-\paren{1-\alpha}\paren{1-\e_{t+1}}}^{\delta'+t^{\beta/2}}\paren{1-\e_{t+1}}^2
  \sbrk{1 - \paren{1-\e_{t+1}}^2}^{\delta'}.
\end{align*}
Denoting $q = 1-\e_{t+1}$ and rearranging, we have
\begin{align*}
  B_2 &\leq \frac{1-p_0}{1-\alpha}q^2\paren{1 - \paren{1-\alpha}q}^{t^{\beta/2}}
  \sum_{\delta'=0}^{\infty} \sbrk{1 + \paren{1-\alpha}q^3 - q^2 - \paren{1-\alpha}q}^{\delta'} \\
  &\leq \frac{1-p_0}{1-\alpha}q^2\paren{1 - \paren{1-\alpha}q}^{t^{\beta/2}}
  \sum_{\delta'=0}^{\infty} \sbrk{1 - q^2}^{\delta'} \\
  &= \frac{1-p_0}{1-\alpha}\paren{1 - \paren{1-\alpha}q}^{t^{\beta/2}},
\end{align*}
where we used the fact that since $q<1$, $\paren{1-\alpha}q^3 < \paren{1-\alpha}q$.

Combining the bounds on $B_1$ and $B_2$, we finally have that
\begin{align}\label{eq:B}
  \mP{B} \leq \frac{2}{t^{\beta/2}}.
\end{align}

We now use the bounds for $\mP{A}$ and $\mP{B}$ to bound the expected regret introduced by the availability mechanism up to a certain time $t$.
Let us examine the interval $\sbrk{T,t}$. During this period, the bounds for $\mP{A}$ and $\mP{B}$ hold, and vary over time. The regret accumulated following a simultaneous cession also varies from episode to episode, and is determined by the time the cession occurred.

In order to analyze the regret, we divide the period $\sbrk{T,t}$ into equal-length intervals. Setting the interval length to be $L = \paren{t-T}^{\beta/2}$, the number of intervals is $N = \frac{t-T}{\paren{t-T}^{\beta/2}} = \paren{t-T}^{1-\beta/2}$. The regret accumulated during a single interval is trivially bounded by its length, $L$. The probability of acquiring regret during an interval depends on its index, i.e. place among intervals, and can be bounded based on the time the interval began, denoted by $t_j$. For an illustration see \figref{fig:time_intervals}.

\begin{figure}[ht!]
\vskip 0.2in
\begin{center}
\centerline{\includegraphics[width=0.75\columnwidth]{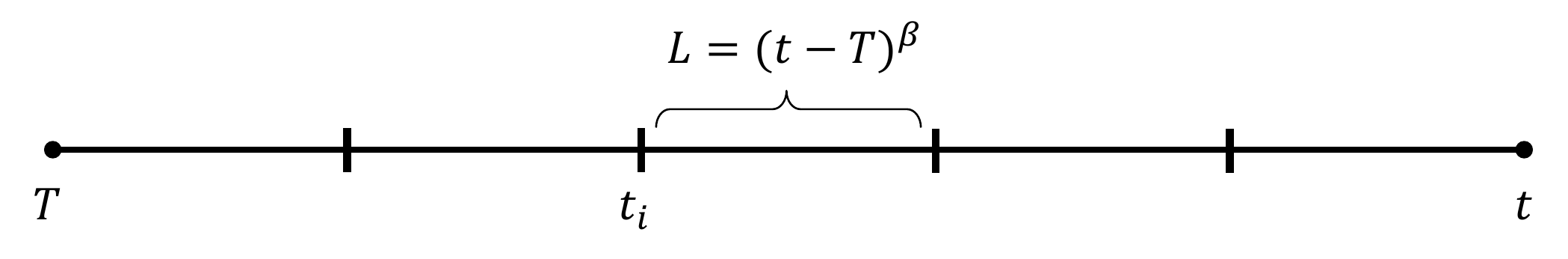}}
\caption{Analyzing regret by dividing time into intervals of equal length.}
\label{fig:time_intervals}
\end{center}
\vskip -0.2in
\end{figure}

The time an interval begins can be simply calculated: $t_j = T + j\paren{t-T}^{\beta/2}$. We bound the probability of a simultaneous cession occurring at $t_j$ using the bounds we derived for $\mP{A}$ and $\mP{B}$, and denote it by $\mP{R_j}$. The bound is based on the fact that (after time T) the way a collision event ends depends only on the sampling history of the episode preceding it.
\begin{align*}
  \mP{R_j} = \paren{1 - \mP{A}}\mP{R_{i-j}} + \mP{B}\paren{1 - \mP{R_{i-j}}} \leq 1 - \mP{A} + \mP{B},
\end{align*}
where $\mP{A}$ and $\mP{B}$ are time dependent, as expressed in \eqref{eq:A} and \eqref{eq:B}.
Developing $1-\mP{A}$:
\begin{align*}
  1-\mP{A}
  &\leq 1 - \paren{1-\e_t}^2\paren{1 - \frac{5 - \alpha}{1-\alpha}\cdot\frac{1}{t^\beta}} \\
  &= 1 - \paren{1-\e_t}^2 + \frac{5 - \alpha}{1-\alpha}\cdot\frac{\paren{1-\e_t}^2}{t^\beta} \\
  &= 2\e_t - \e_t^2 + \frac{5 - \alpha}{1-\alpha}\cdot\frac{\paren{1-\e_t}^2}{t^\beta} \\
  &\leq 2\e_t + \frac{5 - \alpha}{1-\alpha}\frac{1}{t^\beta} \\
  &\leq C_0\frac{1}{t^\beta},
\end{align*}
where $C_0 = \frac{2cK^2}{d\paren{K-1}} + \frac{5 - \alpha}{1-\alpha}$. Combined with the bound on $\mP{B}$, we have that
\begin{align}\label{eq:bound_R_j}
  \mP{R_j} \leq \frac{C_2}{t^{\sfrac{\beta}{2}}},
\end{align}
where $C_2 = C_0 + 2 = \frac{2cK^2}{d\paren{K-1}} + \frac{7 - 3\alpha}{1-\alpha}$.

During the interval $\sbrk{t_j,t_{j+1}}$, more than one collision episode may take place. In order to analyze the regret accumulated during an interval, we examine a simplified (bounding) model: if a collision episode ends with a simultaneous cession at any time during the $j^\text{th}$ interval, the regret accumulated is the entire length of the interval, $L$. Therefore, in order for an interval to contribute zero regret, \emph{all} collision episodes occurring within it must end with unilateral cessions.
Let us define the following random variable:
\begin{align*}
  S_{m,j} = \sum_{i = 1}^m U_i,
\end{align*}
where $U_m$ are random variables that represent the length of the ``quiet'' period following a unilateral cession (this period is denoted by $\tau$ in \figref{fig:unilateral_cession}) and $m$ is the number of consecutive episodes ending with unilateral cessions in the $j^\text{th}$ interval (a realization of a random variable denoted by $M$). In order for an interval to end ``well'' and contribute zero regret, the following event must occur: $S_{m,j} \geq L$ for all possible values of $m$. We can now bound the probability of accumulating regret during the $j^\text{th}$ interval, denoted by $C_j$:
\begin{align}\label{eq:Regret_Int}
  \mP{C_j} \triangleq \sum_{m=1}^\infty \mP{S_{m,j} < L}\mP{M=m}.
\end{align}
The distribution of the number of episodes in an interval, $M$, can be bounded by a geometric random variable, whose success probability is $\mP{R_j}$:
\begin{align}\label{eq:GeomEpisodes}
  \mP{M=m} \leq \paren{1-\mP{R_j}}^m\mP{R_j}.
\end{align}
This can be justified in the following manner: let us examine the $j^\text{th}$ interval. An episode beginning at $t_j$ has a simultaneous cession probability bounded by $\mP{R_j}$. Using the index $k$ to denote the different episodes in the interval, we denote by $t_{k-1}$ the beginning of the $k^\text{th}$ episode, where $t_0\triangleq t_j$. We also denote the probability of a simultaneous cession at the beginning of an episode by $P_k$. Since $\mP{R_j}$ monotonously decreases over time, $P_k < P_{k-1}, \;\forall k = 1..m$.
For a certain interval we have that:
\begin{align*}
  \mP{M=0} &= P_0 = \mP{R_j},\\
  \mP{M=1} &= \paren{1-P_0}P_1 \leq \paren{1-P_0}P_0 = \paren{1-\mP{R_j}}\mP{R_j},\\
  \mP{M=2} &= \paren{1-P_0}\paren{1-P_1}P_2 \leq \paren{1-P_0}\paren{1-P_1}P_1 \leq \paren{1-P_0}\paren{1-P_0}P_0 = \paren{1-\mP{R_j}}^2\mP{R_j},
\end{align*}
where the last inequality can be shown to hold for all $P_k < 0.5$ by differentiating the function $f\paren{p} = \paren{1-p}p$. Defining $T_4 = \min_t\set{t:P_k\paren{t} < 0.5}$, we have that \eqref{eq:GeomEpisodes} holds for all $t > T_4$.

Let us continue developing \eqref{eq:Regret_Int}. Applying Hoeffding's inequality for i.i.d random variables, the use of which we will justify shortly, we have that
\begin{align}\label{eq:Hoeffding}
  \mP{S_{m,j} < L}  = \mP{S_{m,j} - \mE{S_{m,j}} < L - \mE{S_{m,j}}} \leq e^{-\frac{2\alpha^2}{\sum_{i=1}^m\paren{b_i-a_i}^2}},
\end{align}
where $\alpha = \mE{S_{m,j}} - L$, $a_i = 0$ and $b_i \leq t_{j+1}^\beta$. In practice, the different $U_i$'s are not identically distributed. However, since they differ only in the value of $b_i$ and the expression in \eqref{eq:Hoeffding} is monotonously increasing in $b_i$, we can use the value at the end of the interval, $b_{j+1}$, for our bound. Now, $\sum_{i=1}^m\paren{b_i-a_i}^2 \leq m t_{j+1}^{2\beta}$, and, similarly, $\mE{S_{m,j}} \geq \frac{m}{2}t_{j}^\beta$.
Therefore, our bound is
\begin{align*}
  \mP{S_{m,j} < L} \leq
  e^{-\frac{2\paren{\frac{m}{2}t_{j}^\beta - t^{\beta/2}}^2}{m t_{j+1}^{2\beta}}},
\end{align*}
where we substituted $T=0$ for clarity. We will adopt this substitution, which only affects constants, for the rest of our derivation.

Developing the exponent further, we have
\begin{align*}
  \mP{S_{m,j} < L} &\leq
  \exp\paren{-\frac{2}{m t_{j+1}^{2\beta}}
  \paren{\frac{m^2}{4}t_j^{2\beta} - m t_j^\beta t^{\beta/2} + t^\beta}} \\
   &= \exp\paren{-\frac{m t_j^{2\beta}}{2t_{j+1}^{2\beta}}
   + \frac{2t_j^{\beta}t^{\beta/2}}{t_{j+1}^{2\beta}}
   - \frac{2t^\beta}{m t_{j+1}^{2\beta}}} \\
   &\leq \exp\paren{-\frac{m t_j^{2\beta}}{2t_{j+1}^{2\beta}}
   + \frac{2t_j^{\beta}t^{\beta/2}}{t_{j+1}^{2\beta}}}
\end{align*}

Returning to \eqref{eq:Regret_Int} and substituting \eqref{eq:GeomEpisodes} and \eqref{eq:Hoeffding}, we have
\begin{align}
\begin{split}\label{eq:combine_C_j}
  \mP{C_j}
  &\leq \sum_{m=1}^\infty \paren{1-\mP{R_j}}^m \mP{R_j}
  e^{-\frac{m t_j^{2\beta}}{2t_{j+1}^{2\beta}}
   + \frac{2t_j^{\beta}t^{\beta/2}}{t_{j+1}^{2\beta}}} \\
  &\leq \mP{R_j}\sum_{m=1}^\infty
  e^{-m \mP{R_j} -\frac{m t_j^{2\beta}}{2t_{j+1}^{2\beta}}
   + \frac{2t_j^{\beta}t^{\beta/2}}{t_{j+1}^{2\beta}}} \\
   &\leq \mP{R_j}e^{\frac{2t_j^{\beta}t^{\beta/2}}{t_{j+1}^{2\beta}}}
   \sum_{m=1}^\infty e^{-\frac{m}{8}} \\
   &<  8\mP{R_j}e^{\frac{2t_j^{\beta}t^{\beta/2}}{t_{j+1}^{2\beta}}},
\end{split}
\end{align}
where the second inequality is due to the fact that $\paren{1-xy}^n\leq 1 - x + e^{-ny}$ for $0 \leq x,y \leq 1,\; n>0$, as shown in \cite{CoverThomas1991}. In the third inequality we omit the term $-m \mP{R_j}$ from the exponent, as it does not provide a considerable contribution to the decay of the exponent for large values of $t$. Also, observing that $t_j = j t^{\beta/2}$ results in
\begin{align*}
  \frac{t_j^{2\beta}}{t_{j+1}^{2\beta}}
  = \paren{\frac{j t^{\beta/2}}{\paren{j+1} t^{\beta/2}}}^{2\beta}
  = \paren{\frac{j}{j+1}}^{2\beta}
  > \frac{1}{4},
\end{align*}
with the last statement being valid for $j\geq 1, \beta < 1$. This justifies the penultimate inequality in \eqref{eq:combine_C_j}.
All that remains now is to bound the exponent left in \eqref{eq:combine_C_j}, $\exp\paren{\frac{2t_j^{\beta}t^{\beta/2}}{t_{j+1}^{2\beta}}}$. In order to obtain a constant bound for this expression, independent of $t$, we need the power of $t$ in the exponent to be negative. Let us develop the expression and observe its behavior for $j\geq j_0 = \ceil{t^{\beta/2}}$:
\begin{align*}
  \exp\paren{\frac{2t_j^{\beta}t^{\beta/2}}{t_{j+1}^{2\beta}}}
  = \exp\paren{\frac{2\paren{j t^{\beta/2}}^\beta t^{\beta/2}}{\paren{j+1}^{2\beta}t^{\beta^2}}}
  = \exp\paren{\frac{2j^\beta}{\paren{j+1}^{2\beta}}\cdot t^{\frac{\beta-\beta^2}{2}}}
  \leq \exp\paren{\frac{2}{j^{\beta}}\cdot t^{\frac{\beta-\beta^2}{2}}}
  \leq \exp\paren{2 t^{\frac{\beta}{2}-\beta^2}},
\end{align*}
where the power of $t$ is negative for any $\beta>\frac{1}{2}$.
Combining this with \eqref{eq:combine_C_j} and \eqref{eq:bound_R_j}, we have that the probability of accumulating regret in the $j^{\text{th}}$ interval is bounded:
\begin{align}\label{eq:bound_C_j}
  \mP{C_j} \leq \frac{8C_2}{t^{\beta/2}}\quad\forall j > t^{\beta/2}, \beta > \frac{1}{2}.
\end{align}
We can now finally calculate the expected regret contributed by the availability mechanism:
\begin{align*}
  R^A\paren{t} &\leq t^{\beta/2}\sum_{j=0}^N \mP{C_j}
  \leq j_0t^{\beta/2} + t^{\beta/2}\sum_{j=j_0}^N \mP{C_j}
  \leq t^\beta + t^{\beta/2}\sum_{j=t^{\beta/2}}^{t^{1-\beta/2}}\frac{8C_2}{t^{\beta/2}}
  \leq t^\beta + 8C_2 t^{1-\beta/2},
\end{align*}
where we used the fact that $N = t^{1-\beta/2}$ and chose $j_0 = t^{\beta/2}$.

This expression can either be optimized by tuning $\beta$, or, for simplicity, bounded:
\begin{align}\label{eq:availability_bound}
  \mE{R^A\paren{t}} \leq C_3 t^\beta,
\end{align}
where $C_3 = 1+8C_2$, with the bound holding for $\beta > 2/3$.

\qed

\subsection{Proof of \lemref{lem:explore}}
At every time $t$, each user $n$ has an exploration probability $\e_t$, where
\begin{align*}
  \e_t = \min\paren{1, \frac{cK^2}{d^2\paren{K-1}t}},
\end{align*}
where $c > 0$ and $d>0$ are constants, and $K$ is the number of arms.

Let us denote $m = \ceil{\frac{cK^2}{d^2\paren{K-1}}}$. For a single user, for $t > m$, the expected regret accumulated up till time $t$ is bounded:
\begin{align*}
  R_k^E\paren{t} \leq \sum_{\tau = 1}^t \e_t = m + \sum_{\tau = m+1}^t \frac{cK^2}{d^2\paren{K-1}t}.
\end{align*}
Bounding a discrete sum by an integral we have that
\begin{align*}
  R_k^E\paren{t} &\leq m + \frac{cK^2}{d^2\paren{K-1}}\int_{m}^{t-1}\frac{1}{x}dx = M + \frac{cK^2}{d^2\paren{K-1}}\log\frac{t-1}{m}.
\end{align*}
The total regret for all $N$ users is therefore also bounded:
\begin{align*}
  R^E\paren{t} \leq Nm +  \frac{cK^2N}{d^2\paren{K-1}}\log\frac{t-1}{m} \leq Nm + \frac{cK^2N}{d^2{K-1}}\log t.
\end{align*}

\subsection{Proof of \propref{prop:dynamic1}}
Let us observe an arm $k\in K^*_{N}$. In the steady state, this arm is sampled by some user $n \leq N$, and it is marked as taken by all users who are sampling ``worse'' arms, i.e. arms with worse expected rewards. Had it not been marked as taken, users would be colliding when trying to access it.

If user $n$ becomes inactive at time $t$, the arm she had been sampling up till then will become available. However, this change will not be noticed by users sampling other arms immediately - they will only learn of this when their unavailability period for this arm expires. In the worst case, all users marked arm $k$ as taken just before user $n$ left, and the length of the unavailability period they drew was maximal - $t^\beta$. In this case, the arm will not be sampled intentionally for $t^\beta$ cycles, and may only be sampled occasionally due to the exploration factor $\e_t$, which is very small at this point in time.
However, after the unavailability period expires, users that were sampling ``worse'' arms will move on to sample arm $k$ with probability of at least $1-\e_t$, thus re-including it in the set of regularly sampled arms.

\subsection{Proof of \propref{prop:dynamic2}}
We begin by stating a lemma that we will need for the proof, taken from \cite{HLP1988}.
\begin{lem}
\label{lem:sum2}
  For $x,y,\rho,p,q \geq 0$ such that $\frac{1}{p}+\frac{1}{q}=1$, the following holds:
  \begin{align*}
    \paren{x+y}^\rho \leq p^\rho x^\rho + q^\rho y^\rho.
  \end{align*}
\end{lem}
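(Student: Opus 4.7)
The plan is to exploit the simple consequence of the conjugate-exponent constraint that $\frac{1}{p}+\frac{1}{q}=1$ with $p,q>0$ forces $p,q\geq 1$, and hence $p^{\rho}\geq 1$, $q^{\rho}\geq 1$ and $p^{\rho}\geq p^{\rho-1}$, $q^{\rho}\geq q^{\rho-1}$ for every $\rho\geq 0$. With this observation in hand, the target bound is always slack relative to what one obtains from elementary convexity (when $\rho\geq 1$) or subadditivity (when $0\leq\rho\leq 1$) of the map $t\mapsto t^{\rho}$ on $[0,\infty)$, so I would split the argument into these two cases.

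For the case $\rho\geq 1$, I would use that the identity $\frac{1}{p}+\frac{1}{q}=1$ lets us write $x+y$ as the \emph{convex combination} $\frac{1}{p}(px)+\frac{1}{q}(qy)$. Since $t\mapsto t^{\rho}$ is convex on $[0,\infty)$ for $\rho\geq 1$, Jensen's inequality then gives
\begin{align*}
(x+y)^{\rho} \;\leq\; \tfrac{1}{p}(px)^{\rho}+\tfrac{1}{q}(qy)^{\rho} \;=\; p^{\rho-1}x^{\rho}+q^{\rho-1}y^{\rho} \;\leq\; p^{\rho}x^{\rho}+q^{\rho}y^{\rho},
\end{align*}
where the last inequality uses $p,q\geq 1$ together with $\rho-1\leq \rho$.

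For the case $0\leq \rho\leq 1$, I would instead invoke the standard subadditivity $(x+y)^{\rho}\leq x^{\rho}+y^{\rho}$, which follows from concavity of $t\mapsto t^{\rho}$ with $0^{\rho}=0$ (equivalently, from $a^{\rho}\geq a$ for $a\in[0,1]$, applied with $a=x/(x+y)$ and $a=y/(x+y)$). Combined with $p^{\rho}\geq 1$ and $q^{\rho}\geq 1$, this yields $(x+y)^{\rho}\leq x^{\rho}+y^{\rho}\leq p^{\rho}x^{\rho}+q^{\rho}y^{\rho}$. The degenerate configurations ($\rho=0$, $x+y=0$, or the boundary choice $p=1,q=\infty$) are dispatched by direct inspection. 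There is no serious obstacle here: the entire content of the lemma is that the conjugacy constraint makes $\frac{1}{p}(px)+\frac{1}{q}(qy)$ a convex combination equal to $x+y$, after which Jensen or subadditivity closes the proof in a single line.
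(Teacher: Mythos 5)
Your proof is correct. Note, however, that the paper does not actually prove \lemref{lem:sum2} at all --- it is imported verbatim from Hardy--Littlewood--P\'olya \cite{HLP1988} --- so there is no in-paper argument to compare against; what you have done is supply a self-contained elementary proof where the authors gave only a citation. Your two-case structure is sound: the observation that $\frac{1}{p}+\frac{1}{q}=1$ with $p,q>0$ forces $p,q\geq 1$ is the right starting point; for $\rho\geq 1$ the rewriting of $x+y$ as the convex combination $\frac{1}{p}(px)+\frac{1}{q}(qy)$ plus Jensen gives the sharper classical form $(x+y)^\rho\leq p^{\rho-1}x^\rho+q^{\rho-1}y^\rho$ (which is in fact the inequality as it appears in \cite{HLP1988}; the lemma as stated here is a slight weakening of it, as your final step $p^{\rho-1}\leq p^{\rho}$ makes explicit); and for $0\leq\rho\leq 1$ subadditivity of $t\mapsto t^\rho$ together with $p^\rho,q^\rho\geq 1$ closes the argument. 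It is worth remarking that the only invocation of the lemma in the paper (in the proof of \propref{prop:dynamic2}) takes $p=q=2$ and $\rho=\beta<1$, so your second case alone already covers everything the paper needs.
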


As mentioned in \secref{sec:dynamic_users}, the regret bound in \propref{prop:dynamic2} is obtained from a worst case analysis. The regret is then bounded by
\begin{align*}
  R\paren{t} \leq \sum_{i=1}^{N\paren{t}}\tau_i,
\end{align*}
where $\tau_0 = t$ and $\tau_i = \paren{\sum_{j=0}^{i-1}\tau_j}^\beta$.
Let us observe each member of the sum:
\begin{align*}
  \tau_i = \paren{\sum_{j=0}^{i-1}\tau_j}^\beta
  = \paren{\sum_{j=0}^{i-2}\tau_j + \tau_{i-1}}^\beta
  \leq 2^\beta\paren{\sum_{j=0}^{i-2}\tau_j}^\beta + 2^\beta\tau_{i-1}^\beta
  = 2^\beta\paren{\tau_{i-1} + \tau_{i-1}^\beta}
  \leq 2^{\beta+1}\tau_{i-1},
\end{align*}
where the first inequality is an application of \lemref{lem:sum2} and the second follows from the fact that $\forall a>0$, $a^\beta \leq a$, since $\beta<1$.
Repeating this process recursively, we obtain the following:
\begin{align*}
  \tau_i \leq \paren{2^{\beta+1}}^{i-1}t^\beta.
\end{align*}
As a result, we obtain the regret bound:
\begin{align*}
  R\paren{t} \leq \sum_{i=1}^{N\paren{t}}\tau_i \leq t^\beta\sum_{i = 1}^{N\paren{t}}\paren{2^{\beta+1}}^{i-1}
  = \frac{2^{\paren{\beta+1}\paren{N\paren{t}-1}}-1}{2^{\beta+1}-1}t^\beta.
\end{align*}

\end{document}